\def\marginpar#1{\ignorespaces}
\DeclareMathOperator\tr{Tr}
\DeclareMathOperator\var{Var}
\DeclareMathOperator\argmin{\arg \min}
\newtheorem{theorem}{Theorem}[section]
\newtheorem{corollary}[theorem]{Corollary}
\newtheorem{assump}[theorem]{Assumption}
\numberwithin{equation}{section}
\begin{document}
\title[Diffusion models]{Score-based diffusion models via stochastic differential equations}

\author[Wenpin Tang]{{Wenpin} Tang}
\address{Department of Industrial Engineering and Operations Research, Columbia University. 
} \email{wt2319@columbia.edu}

\author[Hanyang Zhao]{{Hanyang} Zhao}
\address{Department of Industrial Engineering and Operations Research, Columbia University. 
} \email{hz2684@columbia.edu}

\date{Feburary 12, 2024} 
\begin{abstract}
This is an expository article on the score-based diffusion models, 
with a particular focus on the formulation via stochastic differential equations (SDE).
After a gentle introduction, 
we discuss the two pillars in the diffusion modeling --
sampling and score matching, 
which encompass the SDE/ODE sampling, 
score matching efficiency,
the consistency models, 
and reinforcement learning.
Short proofs are given to illustrate the main idea of the stated results. 
The article is primarily a technical introduction to the field,
and practitioners may also find some analysis useful in designing new models or algorithms.
\end{abstract}

\maketitle
\textit{Key words:} Diffusion models, discretization, generative models, ordinary differential equations, 
reinforcement learning, sampling, score matching, stochastic differential equations, total variation,
Wasserstein distance.

\smallskip
\textit{AMS 2020 Mathematics Subject Classification:} 60J60, 62E17, 65C30, 68P01.

\smallskip
\begin{center}
\qquad \qquad \qquad \qquad \qquad \qquad  {\em What I cannot create, I do not understand.   -- Richard Feynman}
\end{center}


\section{Introduction}
\label{sc1}

\quad Diffusion models 
describe a family of generative models 
that 
genuinely create 
the desired target distribution from noise.
Inspired from energy-based modeling \cite{Sohl15},
\cite{Ho20, Song19, Song20}
formalized
the idea of diffusion models,
which underpin the recent success
in the text-to-image creators 
such as
DALL·E 2 \cite{Ramesh22} and Stable Diffusion \cite{Rombach22},
and the text-to-video generators Sora \cite{Sora}, Make-A-Video \cite{Singer22} and Veo \cite{Veo}.
Roughly speaking,
{\em score-based diffusion models} rely on 
a forward-backward procedure\footnote{There is another class of generative models, the {\em flow-based models} in which:
\begin{itemize}[itemsep = 3 pt]
\item
The forward process transform the signal to a prescribed noise by a deterministic flow:
$X_0 \to X_1 \to \cdots \to X_n \stackrel{d}{=} p_{\tiny \mbox{noise}}(\cdot)$.
The noise $p_{\tiny \mbox{noise}}(\cdot)$ is often called the {\em prior}.
\item
The backward process reconstructs the signal from noise:  $X_n \to X_{n-1} \to \cdots \to X_0 \stackrel{d}{\approx} p_{\tiny \mbox{data}}(\cdot)$.
\end{itemize}

Examples include normalizing flows \cite{Lip23, PN21}, rectified flows \cite{Esser24, Liu22}, diffusion bridges \cite{DV21}, 
and consistency models \cite{song2023improved, SDC23}. 
See Section \ref{sc6} for a discussion on the probability flow ODE.}:

\begin{itemize}[itemsep = 3 pt]
\item
Forward deconstruction:
starting from the target distribution $X_0 \sim p_{\tiny \mbox{data}}(\cdot)$,
the model gradually adds noise
to transform the signal into noise 
$X_0 \to X_1 \to \cdots \to X_n \stackrel{d}{\approx} p_{\tiny \mbox{noise}}(\cdot)$.
\item
Backward construction:
 start with $X_n \sim p_{\tiny \mbox{noise}}(\cdot)$,
 and reverse the forward process 
 to recover the signal from noise
 $X_n \to X_{n-1} \to \cdots \to X_0 \stackrel{d}{\approx} p_{\tiny \mbox{data}}(\cdot)$.
\end{itemize}
The forward deconstruction is straightforward. 
What's the key in the diffusion models
is the backward construction,
and the underlying question is
how to reverse the forward process.
The answer hinges on two pillars:
{\em time reversal} of (Markov) diffusion processes to set the form of the backward process (Section \ref{sc2}),
and {\em score matching} to learn this process (Section \ref{sc3}).

\begin{figure}[hbtp]
    \begin{subfigure}{1\textwidth}
        \centering
        \includegraphics[width=0.14\linewidth]{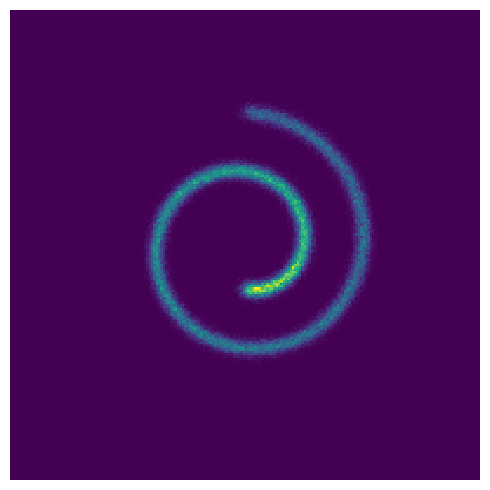}
        \caption{Original Swiss Roll}
        \label{fig:Swiss Roll Original}
    \end{subfigure}%
    
    \begin{subfigure}{1\textwidth}
        \centering
        \includegraphics[width=0.7\linewidth]{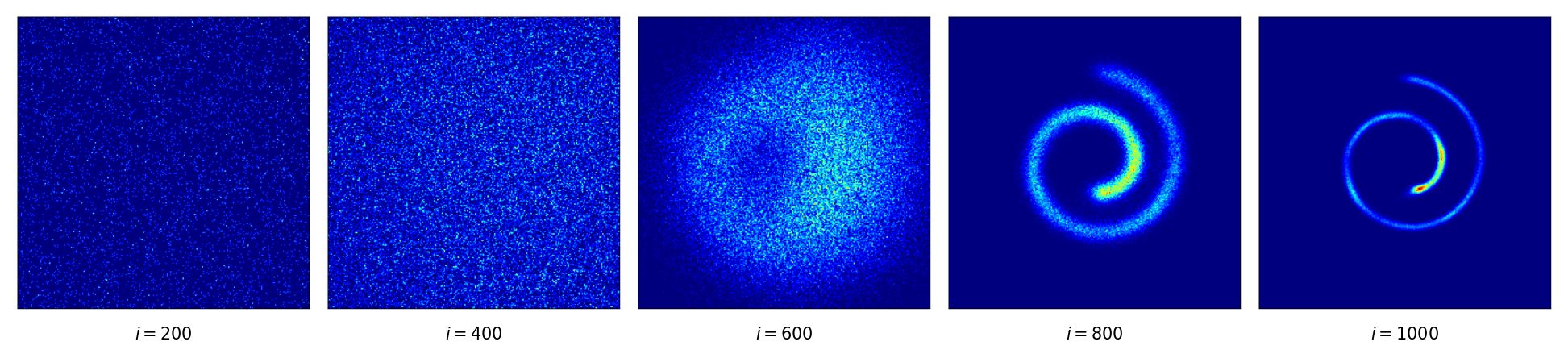}
        \caption{Swiss Roll generation with 200, 400, 600, 800, 10000 iterations.}
        \label{fig:Swiss Roll}
    \end{subfigure}
    \caption{Swiss Roll generation.}
\end{figure}

\quad In the original work \cite{Ho20, DDIM, Song19},
the forward/backward processes are specified by discrete-time Markov chains;
\cite{Song21, Song20} unified the previous models 
through the lens of stochastic differential equations (SDE).
In fact, there is no 
conceptual distinction between
discrete and continuous diffusion models
because 
the diffusion models specified by the SDEs can be regarded as 
the continuum limits of the discrete models (Section \ref{sc22}),
and the discrete diffusion models are obtained from 
the continuous models by suitable time discretization (Section \ref{sc42}).
The view is that 
the SDEs unveil structural properties of the models,
whereas the discrete counterparts give practical implementation.

\quad The purpose of this article is to provide a tutorial 
on the recent theory of score-based diffusion models,
mainly from a {\em continuous perspective} 
with a {\em statistical focus}. 
References on the discrete models will also be given. 
We sketch the proofs for most stated results,
and the assumptions are given only when
they are crucial in the analysis.
We often use the phrase ``under suitable conditions"
to avoid less important technical details,
and keep the presentation concise and to the point.
The paper serves as a gentle introduction to the field,
and practitioners may find
some analysis useful 
in designing new models or algorithms.
Since the SDE formulation is adopted,
we assume that the readers are familiar with
basic stochastic calculus.
\O ksendal's book \cite{Os98} provides a user-friendly account
for stochastic analysis,
and more advanced textbooks are \cite{KS91, SV79}.
See also \cite{CM242, Yang23} for a literature review and future directions in diffusion models,
and \cite{CM24, Win25} for more advanced materials such as
diffusion guidance and fine-tuning.

\quad The remainder of the paper is organized as follows. 
In Section \ref{sc2}, we start with the time reversal formula of diffusion processes,
which is the cornerstone of diffusion models.
Concrete examples are provided in Section \ref{sc22}.
Section \ref{sc3} is concerned with score matching techniques,
another key ingredient of diffusion models.
In Section \ref{sc6},
a deterministic sampler -- the probability ODE flow is introduced,
along with its application to the consistency models.
In Section \ref{sc4},
we consider the convergence of diffusion samplers.
Additional results on score matching are given in Section \ref{sc5}.
Concluding remarks and future directions 
are summarized in Section \ref{sc8}.

\medskip
\noindent
{\em Notations}: Below we collect a few notations that will be used throughout.
\begin{itemize}[itemsep = 3 pt]
\item
For $x, y$ vectors, denote by $x \cdot y$ the inner product between $x$ and $y$,
and $|x|$ the Euclidean ($L^2$) norm of $x$.
\item
For $A$ a matrix (or a vector), $A^\top$ denotes the transpose of $A$.
For $A$ a square matrix, $\tr(A)$ is the trace of $A$.
\item
For $f: \mathbb{R}^d \to \mathbb{R}$, $\nabla f$ is the gradient of $f$,  $\nabla \cdot f = div (f)$ is the divergence of $f$,
$\nabla^2 f$ is the Hessian of $f$,
and $\Delta f$ is the Laplacian of $f$.
For $f = (f_1, \ldots, f_n)$ with $f_i: \mathbb{R}^d \to \mathbb{R}$ its $i^{th}$ column,
$(\nabla f)^\top_i = \nabla f_i$ (the Jacobian matrix),
and 
$(\nabla \cdot f)_i = div(f_i)$ for $i = 1, \ldots, n$.
\item
The symbol $a = \mathcal{O}(b)$ means that $a/b$ is bounded,
and $a = o(b)$ means that $a/b$ tends to zero
as some problem parameter tends to $0$ or $\infty$.
\item
For $X$ a random variable, $\mathcal{L}(X)$ denotes the probability distribution of $X$,
and $X \sim p(\cdot)$ means that $X$ has the distribution $p(\cdot)$.
The notation $X \stackrel{d}{=} Y$ represents that $X$ and $Y$ have the same distribution.
\item
$\mathbb{E}X$ and $\var(X)$ denote the expectation and the variance of $X$ respectively.
We use $\widehat{\mathbb{E}}X$ for the empirical average of $X$, i.e. 
$\widehat{\mathbb{E}}X = \frac{1}{n} \sum_{i= 1}^n X_i$ for $X_1, \cdots, X_n$ independent and
identically distributed copies of $X$.
\item
$\mathcal{N}(\mu, \Sigma)$ denotes Gaussian distribution with mean $\mu$
and covariance matrix $\Sigma$.
\item
For $P$ and $Q$ two probability distributions, 
$d_{TV}(P,Q) = \sup_A|P(A) - Q(A)|$ is the total variation distance between $P$ and $Q$;
$\mbox{KL}(P,Q) = \int \log \left(\frac{dP}{dQ} \right)dP$ is the Kullback-Leilber (KL) divergence between $P$ and $Q$;
$W_2(P,Q) = \left(\inf_\gamma \mathbb{E}_{(X,Y) \sim \gamma} |X - Y|^2 \right)^{\frac{1}{2}}$,
where the infimum is taken over all couplings $\gamma$ of $P$ and $Q$,
is the Wasserstein-2 distance between $P$ and $Q$.
\end{itemize}

\section{Time reversal formula}
\label{sc2}

\quad In this section, 
we present the time reversal formula,
which lays the foundation for score-based diffusion models.

\quad Consider a general (forward) SDE:
\begin{equation}
\label{eq:SDE}
dX_t = f(t, X_t) dt + g(t,X_t) dB_t, \quad X_0 \sim p_{\tiny \mbox{data}}(\cdot),
\end{equation}
where $(B_t, \, t \ge 0)$ is $n$-dimensional Brownian motion\footnote{Write $B = (B^1, \ldots, B^n)$,
where $B^i$'s are independent standard Brownian motions.}, 
and $f: \mathbb{R}_+ \times \mathbb{R}^d \to \mathbb{R}^d$ 
and $g: \mathbb{R}_+ \times \mathbb{R}^d \to \mathbb{R}^{d \times n}$
are diffusion parameters.
Some conditions on $f(\cdot, \cdot)$ and $g(\cdot, \cdot)$ are required
so that the SDE \eqref{eq:SDE} is well-defined.
For instance,
\begin{itemize}[itemsep = 3 pt]
\item
If $f$ and $g$ are Lipschitz and have linear growth in $x$ uniformly in $t$, then \eqref{eq:SDE} has a strong solution which is pathwise unique.
\item
If $f$ is bounded, and $g$ is bounded, continuous and strictly elliptic, then \eqref{eq:SDE} has a weak solution which is unique in distribution.
\end{itemize}
See \cite{KS91, Os98} for background on the well-posedness of SDEs, 
and \cite[Chapter 1]{CE05} for a review of related results.

\quad For ease of presentation,
we assume that $X_t$ has a (suitably smooth) 
probability density $p(t, \cdot)$.
Let $T > 0$ be fixed, and run the SDE \eqref{eq:SDE} until time $T$ to get $X_T \sim p(T, \cdot)$.
Now if we start with $p(T, \cdot)$ and run the process $X$ backward for time $T$,
then we can generate copies of $p(0, \cdot) = p_{\tiny \mbox{data}}(\cdot)$.
More precisely, consider the time reversal $Y_t: = X_{T-t}$ for $0 \le t \le T$.
Assuming that $Y$ also satisfies an SDE, 
we can run the backward process:
\begin{equation*}
dY_t = \bar{f}(t, Y_t) dt + \bar{g}(t, Y_t ) dB_t, \quad Y_0 \sim p(T, \cdot).
\end{equation*}
So we generate the desired $Y_T  \sim p_{\tiny \mbox{data}}(\cdot)$ at time $T$.

\quad As mentioned earlier, 
the high-level idea of diffusion models is to create
the target distribution from noise.
This means that 
the noise should not depend on the target distribution.
Thus, we replace the (backward) initialization $Y_0 \sim p(T, \cdot)$ 
with some noise $p_{\tiny \mbox{noise}}(\cdot)$:
\begin{equation}
\label{eq:SDErev}
dY_t = \bar{f}(t, Y_t) dt + \bar{g}(t, Y_t ) dB_t, \quad Y_0 \sim p_{\tiny \mbox{noise}}(\cdot),
\end{equation}
as an approximation. Two natural questions arise: 
\begin{enumerate}[itemsep = 3 pt]
\item
How do we choose the noise $p_{\tiny \mbox{noise}}(\cdot)$?
\item
What are the coefficients $\bar{f}(\cdot, \cdot)$ and $\bar{g}(\cdot, \cdot)$?
\end{enumerate}

For (1), the noise $p_{\tiny \mbox{noise}}(\cdot)$ should be easily sampled. It is
commonly derived by 
decoupling $X_0$ from 
the conditional distribution of $(X_T \,|\, X_0)$,
as we will explain with the examples in Section \ref{sc22}. 
It is expected that the closer the the distributions $p(T, \cdot)$ and $p_{\tiny \mbox{noise}}(\cdot)$ are,
the closer the distribution of $Y_T$ sampled from \eqref{eq:SDErev} is to $p_{\tiny \mbox{data}}(\cdot)$.
For (2), it relies on the following result on the time reversal of SDEs.
To simplify the notations, we write 
\begin{equation*}
a(t,x):= g(t,x) g(t,x)^\top.
\end{equation*}

\begin{theorem}[Time reversal formula]
\cite{Ander82, HP86}
\label{thm:SDErev}
Under suitable conditions on $f(\cdot, \cdot)$, $g(\cdot, \cdot)$ and $\{p(t, \cdot)\}_{0 \le t \le T}$, 
we have:
\begin{equation}
\label{eq:coefrev}
\overline{g}(t,x) = g(T-t, x), \quad
\overline{f}(t,x) = -f(T-t, x) + \frac{\nabla \cdot (p(T-t,x) a(T-t,x))}{p(T-t, x)}.
\end{equation}
\end{theorem}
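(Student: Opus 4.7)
The plan is to match the Fokker--Planck equations of the forward SDE \eqref{eq:SDE} and the candidate reversed SDE \eqref{eq:SDErev}. Under the stated regularity, $p(t,x)$ satisfies the forward Kolmogorov equation
\begin{equation*}
\partial_t p(t,x) = -\nabla \cdot \big(f(t,x)\, p(t,x)\big) + \frac{1}{2} \sum_{i,j} \partial_i \partial_j \big(a_{ij}(t,x)\, p(t,x)\big),
\end{equation*}
and since the density of $Y_t := X_{T-t}$ is $q(t,x) := p(T-t,x)$, flipping the sign of the time derivative immediately gives
\begin{equation*}
\partial_t q(t,x) = \nabla \cdot \big(f(T-t,x)\, q(t,x)\big) - \frac{1}{2} \sum_{i,j} \partial_i \partial_j \big(a_{ij}(T-t,x)\, q(t,x)\big).
\end{equation*}

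I would then require that this $q$ also solve the Fokker--Planck equation associated to \eqref{eq:SDErev}, with drift $\bar f$ and diffusion tensor $\bar a := \bar g \bar g^\top$. The natural identification $\bar g(t,x) = g(T-t,x)$ matches the second-order parts up to the sign flip captured by the two Fokker--Planck equations. Using the identity $\sum_{i,j} \partial_i \partial_j (a_{ij} q) = \nabla \cdot \big[\nabla \cdot (a q)\big]$, where $\nabla \cdot (aq)$ denotes the vector field with $i$-th component $\sum_j \partial_j(a_{ij} q)$, the matching of the two PDEs collapses to
\begin{equation*}
\nabla \cdot \big(\bar f(t,x)\, q(t,x)\big) = -\nabla \cdot \big(f(T-t,x)\, q(t,x)\big) + \nabla \cdot \big[\nabla \cdot \big(a(T-t,x)\, q(t,x)\big)\big].
\end{equation*}
The natural pointwise solution $\bar f\, q = -f\, q + \nabla \cdot (a q)$ then yields \eqref{eq:coefrev} after dividing by $q = p(T-t, \cdot)$.

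The hard part is that the Fokker--Planck equation only determines the one-dimensional marginals of a diffusion, not its full law: any divergence-free modification of $\bar f$ preserves the marginals yet in general destroys the process-level identity $\mathcal{L}((Y_t)_{t \in [0,T]}) = \mathcal{L}((X_{T-t})_{t \in [0,T]})$. Upgrading marginal matching to a genuine time reversal thus needs an independent input. The standard route, pursued in \cite{Ander82, HP86}, is to verify directly that $(X_{T-t})_{0 \le t \le T}$ solves the martingale problem with generator $\bar f \cdot \nabla + \frac{1}{2} \tr(\bar a \nabla^2)$ for the $\bar f$ above; this uses the time-symmetry of Brownian stochastic integrals together with a duality/integration-by-parts argument against test functions integrated against $p(t, \cdot)$. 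One also needs mild regularity of $\{p(t, \cdot)\}_{0 \le t \le T}$ (positivity, smoothness, and sufficient decay) to make the score-like expression $\nabla \cdot (p a)/p$ well-defined and to justify the divergence manipulations above.
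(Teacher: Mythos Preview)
Your proposal is correct and follows essentially the same route as the paper: both derive \eqref{eq:coefrev} by matching the Fokker--Planck equation of the forward process with that of the candidate reversed SDE, setting $\bar a(t,x)=a(T-t,x)$ and reading off $\bar f$ from the first-order terms. The paper presents this explicitly as a ``heuristic derivation'' and defers the rigorous process-level identity to \cite{Ander82, HP86}, so your closing paragraph flagging the gap between marginal matching and genuine time reversal (and pointing to the martingale-problem justification) is entirely in line with the paper's own treatment.
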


\begin{proof}
We give a heuristic derivation of the formula \eqref{eq:coefrev},
which consists of identifying the Fokker–Planck equation of 
$\overline{p}(t,x) := p(T-t, x)$ (the probability density of the time reversal $Y$).
First, the infinitesimal generator of $X$ is 
$\mathcal{L}:= \frac{1}{2} \nabla \cdot a(t,x) \nabla +f_a \cdot \nabla$,
where $f_a:= f - \frac{1}{2} \nabla \cdot a$.
It is known that the density $p(t,x)$ satisfies the Fokker–Planck equation:
\begin{equation*}
\frac{\partial}{\partial t} p(t,x) = \mathcal{L}^*p(t,x),
\end{equation*}
where $\mathcal{L}^*:= \frac{1}{2} \nabla \cdot a(t,x) \nabla - \nabla \cdot f_a$ is the adjoint operator of $\mathcal{L}$.
Thus,
\begin{equation}
\label{eq:FPfdbar}
\frac{\partial}{\partial t} \overline{p}(t,x) = -\frac{1}{2} \nabla \cdot \left(a(T-t, x) \, \nabla \overline{p}(t,x) \right) + \nabla \cdot \left(f_a(T-t, x) \, \overline{p}(t,x) \right).
\end{equation}
On the other hand, we expect the generator of $Y$ to be 
$\overline{\mathcal{L}}:=  \frac{1}{2} \nabla \cdot \overline{a}(t,x) \nabla + \overline{f}_{\overline{a}} \cdot \nabla$.
The Fokker-Planck equation for $\overline{p}(t,x)$ is:
\begin{equation}
\label{eq:FPbdbar}
\frac{\partial}{\partial t} \overline{p}(t,x) = \frac{1}{2} \nabla \cdot \left(\overline{a}(t, x) \, \nabla \overline{p}(t,x) \right) - \nabla \cdot \left(\overline{f}_{\overline{a}}(t, x) \, \overline{p}(t,x) \right).
\end{equation}
Comparing \eqref{eq:FPfdbar} and \eqref{eq:FPbdbar},
we set $\overline{a}(t,x) = a(T-t, x)$ and then get
\begin{equation*}
\left(f_a(T-t,x) + \overline{f}_{\overline{a}}(t,x)\right) \overline{p}(t,x) = a(T-t,x) \, \nabla \overline{p}(t,x),
\end{equation*}
which can be rewritten as:
\begin{equation*}
\left(f(T-t,x) + \overline{f}(t,x)\right) \overline{p}(t,x)- \nabla \cdot a(T-t, x) \, \overline{p}(t,x) = a(T-t,x) \, \nabla \overline{p}(t,x).
\end{equation*}
This yields the desired result. 
\end{proof}

\quad Let's comment on Theorem \ref{thm:SDErev}.
\cite{HP86, MNS89} proved the result by assuming that $f(\cdot, \cdot)$ and $g(\cdot, \cdot)$ are globally Lipschitz, and 
the density $p(t,x)$ satisfies an a priori $H^1$ bound. 
The implicit condition on $p(t,x)$ is guaranteed if $\partial_t + \mathcal{L}$ is hypoelliptic, or $\nabla^2 a(t,x)$ is uniformly bounded.
These conditions were relaxed in \cite{Quastel02}, where only the boundedness of $\nabla a(t,x)$ in some $L^2$ norm is required. 
In another direction, \cite{Fo85, Fo86} used an entropy argument to prove the time reversal formula in the case of $n =d$ and $g(t,x) = \sigma I$. 
This approach was further developed by \cite{CCGL21} in connection with optimal transport theory. 

By Theorem \ref{thm:SDErev}, the backward process is:
\begin{equation}
\label{eq:SDErevexp}
\begin{aligned}
dY_t = (-f(T-t, Y_t) + a(T-t, Y_t) \nabla & \log p(T-t,  Y_t) + \nabla \cdot a(T-t, Y_t))dt  \\
&+ g(T-t, Y_t ) dB_t, \quad Y_0 \sim p_{\tiny \mbox{noise}}(\cdot).
 \end{aligned}
\end{equation}
Since $f(\cdot, \cdot)$ and $g(\cdot, \cdot)$ are chosen in advance, 
all but the term $\nabla \log p(T-t, Y_t)$ in \eqref{eq:SDErevexp} are avaliable. 
So in order to run the backward process \eqref{eq:SDErevexp}, 
we need to learn $\nabla \log p(t,x)$,
known as {\em Stein's score function}.
Recently developed score-based generative modeling consists of 
estimating $\nabla \log p(t,x)$ by function approximations, 
which will be discussed in Section \ref{sc3}.

\section{Examples}
\label{sc22}

\quad We have seen that a diffusion model is specified by the pair
$(f(\cdot, \cdot), g(\cdot, \cdot))$.
The design of $(f(\cdot, \cdot), g(\cdot, \cdot))$ is important
because it determines the quality of data generation.
There are two general rules of thumb:
easy learning from the forward process,
and good sampling from the backward process,
which will be clear in the next two sections.

\quad Now let's provide some examples of the diffusion model \eqref{eq:SDE}--\eqref{eq:SDErevexp}.
Most existing models take the form:
\begin{equation*}
\label{eq:simplify}
n = d \quad \mbox{and} \quad g (t,x) = g(t)I,
\end{equation*}
where $g(t) \in \mathbb{R}_+$.
That is, the model parameter $g(\cdot, \cdot)$ is only time-dependent,
rather than (time and) state-dependent. 
One important reason is that 
for the SDEs with state-dependent coefficient,
it is often not easy to decouple $X_0$ from the distribution of $(X_T \,|\, X_0)$.
Hence, it is not clear how to pick $p_{\tiny \mbox{noise}}(\cdot)$
as a proxy to $X_T \sim p(T, \cdot)$,
as is the case of geometric Brownian motion.

\medskip
\noindent
(a) 
{\em Ornstein-Ulenback (OU) process} \cite{DV21}:
\begin{equation}
\label{eq:OUfg}
f(t,x) = \theta (\mu - x)  \mbox{ with } \theta > 0, \, \mu \in \mathbb{R}^d
\quad \mbox{and} \quad
g(t) = \sigma > 0.
\end{equation}
The distribution of $(X_t \,|\, X_0 = x)$ is 
$p(t, \cdot; x) = \mathcal{N}(\mu + (x - \mu) e^{-\theta t}, \frac{\sigma^2}{2\theta} (1 - e^{-2 \theta t}) I)$.
An obvious candidate for the noise is
the stationary distribution of the OU process
$p_{\tiny \mbox{noise}}(\cdot) =  \mathcal{N}(\mu, \frac{\sigma^2}{2 \theta} I)$.
The backward process is specified to:
\begin{equation}
\label{eq:OUrev2}
\begin{aligned}
d Y_t = \left(\theta(Y_t - \mu) + \sigma^2 \nabla \log p(T-t, Y_t) \right) dt & +g \, dB_t, 
\,\, Y_0 \sim \mathcal{N}\left(\mu, \frac{\sigma^2}{2 \theta} I\right).
\end{aligned}
\end{equation}
More generally, consider the overdamped Langevin process:
\begin{equation}
f(t,x) = - \nabla U(x) \quad \mbox{and} \quad g(t) = \sigma > 0,
\end{equation}
with a suitable landscape $U: \mathbb{R}^d \to \mathbb{R}$.
The OU process corresponds to the choice of $U(x) = \frac{\theta}{2}|x - \mu|^2$.
The backward process is then:
\begin{equation}
\label{eq:OUrev}
d Y_t = \left(\nabla U(Y_t)+ \sigma^2 \nabla \log p(T-t, Y_t) \right) dt  + \sigma dB_t, 
\,\, Y_0 \propto \exp\left(- \frac{2 \, U(\cdot)}{\sigma^2}\right).
\end{equation}

\noindent
(b) {\em Variance exploding (VE) SDE} \cite{Song20}: 
This is the continuum limit of {\em score matching with Langevin dynamics} (SMLD) \cite{Song19}.
The idea of SMLD is to use $N$ noise scales $\sigma_0 < \sigma_1 < \cdots < \sigma_{N-1}$, 
and run the (forward) Markov chain:
\begin{equation*}
x_i = x_{i-1} + \sqrt{\sigma_i^2 - \sigma_{i-1}^2} z_{i-1}, \quad 1 \le i \le N,
\end{equation*}
where $z_0, \ldots, z_{N-1}$ are independent and identically distributed $\mathcal{N}(0, I)$.
By taking $\Delta t = \frac{T}{N} \to 0$,  $\sigma(i\Delta t) = \sigma_i$, $x_{i\Delta t} = x_i$ and $z_{i\Delta t} = z_i$,
we get 
$x_{t+\Delta t} = x_t+ \sqrt{\sigma^2(t+\Delta t) - \sigma^2(t)} \, z_t \approx x_t + \sqrt{\frac{d \sigma^2(t)}{dt} \Delta t} \, z_t$.
That is,
\begin{equation*}
dX_t = \sqrt{\frac{d \sigma^2(t)}{dt}} \,  dB_t, \quad 0 \le t \le T.
\end{equation*}
This implies that $f(t,x) = 0$
and $g(t) = \sqrt{\frac{d \sigma^2(t)}{dt}}$.
 The noise scales are typically set to be a geometric sequence
$\sigma(t) = \sigma_{\min} \left( \frac{\sigma_{\max}}{\sigma_{\min}} \right)^{\frac{t}{T}}$,
with $\sigma_{\min} \ll  \sigma_{\max}$.
Thus,
\begin{equation}
\label{eq:sigVE}
f(t,x) = 0 \quad \mbox{and} \quad
g(t) = \sigma_{\min} \left(\frac{\sigma_{\max}}{\sigma_{\min}} \right)^{\frac{t}{T}} \sqrt{\frac{2}{T} \log \frac{\sigma_{\max}}{\sigma_{\min}}}.
\end{equation}
Here $X_t = X_0 + \int_0^t g(s) dB_s$ is the Paley-Wiener integral. 
The distribution of $(X_t \,|\, X_0 = x)$ is:
\begin{equation}
\label{eq:VEp}
\begin{aligned}
p(t, \cdot; x) &= \mathcal{N}\left(x, \left(\int_0^t g^2(s) ds\right) I \right) = \mathcal{N}\left(x, \sigma^2_{\min}\left( \left(\frac{\sigma_{\max}}{\sigma_{\min}} \right)^{\frac{2t}{T}}- 1 \right) I \right).
\end{aligned}
\end{equation}
The name ``variance exploding" comes from the fact that $\var(X_0) \ll \var(X_T)$
because $\sigma_{\min} \ll \sigma_{\max}$.
Note that the forward process $X$ does not have a stationary distribution,
but we can decouple $x$ from $p(T, \cdot) = \mathcal{N}\left(x, (\sigma^2_{\max} - \sigma^2_{\min}) I\right)$ 
to get
\begin{equation*}
p_{\tiny \mbox{noise}}(\cdot) = \mathcal{N}(0,(\sigma^2_{\max} - \sigma^2_{\min}) I)).
\end{equation*}
The backward process is:
\begin{equation}
\label{eq:VErev2}
d Y_t  = g^2(T-t) ) \nabla \log p(T-t, Y_t)+  g(T-t) dB_t,  \,\, Y_0 \sim \mathcal{N}(0,(\sigma^2_{\max} - \sigma^2_{\min}) I)),
\end{equation}
where $g(t)$ is defined by \eqref{eq:sigVE}.

\quad As we will see in Section \ref{sc6}, 
it is more convenient to use 
$\sigma(t) = t$ and $g(t) = \sqrt{2t}$ 
(suggested by \cite{karras2022elucidating} as a base model), 
with $p_{\tiny \mbox{noise}}(\cdot) = \mathcal{N}(0, T^2 I)$ as an alternative parametrization.

\medskip
\noindent
(c) {\em Variance preserving (VP) SDE} \cite{Song20}:
This is the continuum limit of {\em denoising diffusion probabilistic models} (DDPM) \cite{Ho20}.
DDPM uses $N$ noise scales $\beta_1 < \beta_2 < \cdots < \beta_N$,
and runs the (forward) Markov chain:
\begin{equation*}
x_i = \sqrt{1- \beta_i} x_{i-1} + \sqrt{\beta_i} z_i, \quad 1 \le i \le N.
\end{equation*}
Similarly, by taking the limit $\Delta t = \frac{T}{N} \to 0$ with $\beta(i\Delta t) = N \beta_i/T$,
$x_{i\Delta t} = x_i$ and $z_{i\Delta t} = z_i$,
we get
$x_{t+\Delta t} = \sqrt{1 - \beta(t) \Delta t} \, x_t + \sqrt{\beta(t) \Delta t}\, z_t
\approx x_t - \frac{1}{2} \beta(t) x_t \Delta t + \sqrt{\beta(t) \Delta t} \, z_t$.
This leads to:
\begin{equation*}
dX_t = -\frac{1}{2} \beta(t) X_t \, dt + \sqrt{\beta (t)} \, dB_t, \quad 0 \le t \le T.
\end{equation*}
We have:
\begin{equation}
\label{eq:sigbVP}
f(t,x) = -\frac{1}{2}\beta(t) x \quad \mbox{and} \quad g(t) = \sqrt{\beta(t)}.
\end{equation}
The noise scales of DDPM are typically an arithmetic sequence:
\begin{equation}
\label{eq:beta}
\beta(t)= \beta_{\min} + \frac{t}{T}(\beta_{\max} - \beta_{\min}), \quad \mbox{with } \beta_{\min} \ll \beta_{\max},
\end{equation}
(and $\beta_{\min}$, $\beta_{\max}$ are scaled to the order $N/T$.)
By applying It\^o's formula to $e^{\frac{1}{2}\int_0^t \beta(s) ds} X_t$,
we get the distribution of $(X_t \,|\, X_0 = x)$:
\begin{equation}
\label{eq:VPp}
\begin{aligned}
p(t, \cdot;x) = 
 \mathcal{N} \left( e^{-\frac{1}{2} \int_0^t \beta(s) ds} x, (1 - e^{-\int_0^t \beta(s) ds}) I  \right).
\end{aligned}
\end{equation}
Thus, 
$p(T, \cdot;x) = \mathcal{N}(e^{-\frac{T}{4} (\beta_{\max} + \beta_{\min})} x, (1 - e^{-\frac{T}{2} (\beta_{\max} + \beta_{\min})}) I)$,
which is close to $\mathcal{N}(0, I)$ if $\beta_{\max}$, or $T$ is set to be large. 
This justifies the name ``variance preserving", and
we can set $p_{\tiny \mbox{noise}} = \mathcal{N}(0,I)$.
The backward process is:
\begin{equation}
\label{eq:VPrev}
\begin{aligned}
d Y_t = \bigg(\frac{1}{2} \beta(T-t) Y_t + \beta(T-t) \nabla \log p(T-t, Y_t)\bigg)  dt 
+ & \sqrt{\beta(T-t))} dB_t, \\
& Y_0 \sim \mathcal{N}(0, I),
\end{aligned}
\end{equation}
where $\beta(t)$ is defined by \eqref{eq:beta}.

\quad Note that the choice of $\beta(t) = \sigma^2$ leads to the OU process with $f(t,x) = -\frac{1}{2} \sigma^2 x$ and $g(t) = \sigma$. 
In fact, the VP SDE is also referred to as the OU process 
in the literature \cite{Chen23, LLT22, LLT23}.
Here we emphasize the difference between the OU process,
and the VP SDE with the (typical) choice \eqref{eq:beta} for $\beta(t)$.

\medskip
\noindent
(d) {\em Sub-variance preserving (subVP) SDE} \cite{Song20}:
\begin{equation}
\label{eq:sigbSVP}
f(t,x) = -\frac{1}{2} \beta(t) x \quad \mbox{and} \quad
g(t) = \sqrt{\beta(t) (1 - e^{-2 \int_0^t \beta(s) ds})},
\end{equation}
where $\beta(t)$ is defined by \eqref{eq:beta}.
The same reasoning as in (c) shows that
the distribution of $(X_t\,|\, X_0 = x)$ is:
\begin{equation}
\label{eq:SVPp}
\begin{aligned}
p(t, \cdot;x) & =  \mathcal{N} \left( e^{-\frac{1}{2} \int_0^t \beta(s) ds} x, (1 - e^{-\int_0^t \beta(s) ds})^2 I  \right) \\
& =  \mathcal{N}\left( e^{-\frac{t^2}{4T}(\beta_{\max} - \beta_{\min}) - \frac{t}{2} \beta_{\min}}  x,  
(1 -e^{-\frac{t^2}{2T}(\beta_{\max} - \beta_{\min}) - t \beta_{\min}} )^2 I \right).
\end{aligned}
\end{equation}
Note that $\var_{ \tiny \mbox{sub-VP}}(X_t) \le \var_{ \tiny \mbox{VP}}(X_t)$, 
hence the name ``sub-VP",
and $p_{\tiny \mbox{noise}}(\cdot) = \mathcal{N}(0,I)$.
Set 
$$\gamma(t):= e^{-2 \int_0^t \beta(s) ds} = e^{-\frac{t^2}{T}(\beta_{\max} - \beta_{\min}) - 2t \beta_{\min}},$$
so
$g(t) = \sqrt{\beta(t) (1 - \gamma(t))}$.
The backward process is:
\begin{equation}
\label{eq:SVPrev2}
\begin{aligned}
d Y_t & = \bigg( \frac{1}{2} \beta(T-t) Y_t  + \beta(T-t)(1 - \gamma(T-t)) \nabla \log p(T-t, Y_t)\bigg)  dt \\
& \qquad \qquad \qquad \qquad+ \sqrt{\beta(T-t) (1 - \gamma(T-t))} dB_t, \quad Y_0 \sim \mathcal{N}(0, I).
\end{aligned}
\end{equation}

\noindent
(e) {\em Contractive diffusion probabilistic models} (CDPM) \cite{TZ24}:
The idea of CDPM is to force contraction on the backward process
to narrow the score matching errors
(at the cost of possible noise approximation bias).
A practical criterion is:
\begin{equation}
\label{eq:rfpositive}
(x -x') \cdot (f(t,x) - f(t, x')) \ge r_f(t) |x-x|^2, \quad \mbox{with }  \inf _{t \in [0, T]}r_f(t) > 0.
\end{equation}
For instance, we consider {\em contractive variance preserving (CVP) SDE}:
\begin{equation}
\label{eq:defCVP}
f(t,x) = \frac{1}{2} \beta(t)x \quad \mbox{and} \quad g(t) = \sqrt{\beta(t)},
\end{equation}
where $\beta(t)$ is defined by \eqref{eq:beta}.
Similar to (c), we set 
$$p_{\tiny \mbox{noise}}(\cdot) = \mathcal{N}\left( 0, 
(e^{\frac{T}{2} (\beta_{\max} + \beta_{\min})}-1) I \right).$$
The backward process is:
\begin{equation}
\begin{aligned}
d Y_t  = & \left(-\frac{1}{2} \beta(T-t) Y_t  +\beta(T-t) \nabla \log p(T-t, Y_t)\right)  dt \\
& \qquad \qquad + \sqrt{\beta(T-t)} dB_t, \quad 
Y_0 \sim \mathcal{N}\left( 0, 
(e^{\frac{T}{2} (\beta_{\max} + \beta_{\min})}-1 ) I \right).
\end{aligned}
\end{equation}


\quad Among these examples, VE and VP SDEs are the most widely used models
(e.g., Stable Diffusion uses VP, and consistency models rely on VE.)
CDPM achieves slightly better performance on certain tasks,
but needs to fine-tune its hyperparameters to trade off 
noise approximation bias and score matching errors.
In each of the above models,
one can estimate the corresponding score function $\nabla \log p(t,x)$ for backward sampling --
this is inconvenient because score matching often requires large computational cost. 
Nevertheless, there is a reparametrization trick via the probability flow ODE
that allows to sample from different models with a single score function of VE.
This will be discussed in Section \ref{sc61}.

\section{Score matching techniques}
\label{sc3}

\quad In Section \ref{sc2},
we have seen that 
the main obstacle for backward sampling \eqref{eq:SDErevexp}
is the unknown score function $\nabla \log p(t,x)$.
This section reviews the recently developed 
score-based generative modeling \cite{Ho20, Song19, Song20},
whose goal is to estimate the score function $\nabla \log p (t,x)$
by a family of functions $\{s_\theta(t,x)\}_\theta$ (e.g., kernels and neural nets).
This technique is referred to as {\em score matching}.
With the (true) score function $\nabla \log p (t,x)$ 
being replaced with the score matching function $s_\theta(t,x)$,
the backward process \eqref{eq:SDErevexp} becomes:
\begin{equation}
\label{eq:SDErevtheta}
\begin{aligned}
dY_t = (-f(T-t, Y_t) +  a(T-t, Y_t) & s_\theta (T-t, Y_t) 
 + \nabla \cdot a(T-t, Y_t))dt \\
& + g(T-t, Y_t ) dB_t, \quad Y_0 \sim p_{\tiny \mbox{noise}}(\cdot).
 \end{aligned}
\end{equation}
In the case of $g(t,x) = g(t) I$, it simplifies to:
\begin{equation}
\label{eq:SDErevtheta2}
\begin{aligned}
dY_t = \left(-f(T-t, Y_t) + g^2(T-t) s_\theta (T-t, Y_t) \right)dt  + & g(T-t ) dB_t, 
\\ &Y_0 \sim p_{\tiny \mbox{noise}}(\cdot).
\end{aligned}
\end{equation}

\quad The plan of this section is as follows.
In Section \ref{sc31},
we present the general score matching technique. 
The two (popularly used) scalable score matching methods --
{\em sliced score matching} and {\em denoising score matching}
will be discussed in 
Sections \ref{sc32} and \ref{sc33}.

\subsection{Score matching}
\label{sc31}

Recall that $\{s_\theta(t,x)\}_\theta$ is a family of functions on $\mathbb{R}_+ \times \mathbb{R}^d$
parametrized by $\theta$,
which are used to approximate the score function $\nabla \log p(t,x)$.
Fix time $t$, 
the goal is to solve the stochastic optimization problem:
\begin{equation}
\label{eq:scoremat}
\min_\theta \mathcal{J}_{\text{ESM}}(\theta):=\mathbb{E}_{p(t, \cdot)} |s_\theta(t,X) - \nabla \log p(t,X)|^2,
\end{equation}
where $\mathbb{E}_{p(t, \cdot)}$ denotes the expectation taken over $X \sim p(t, \cdot)$.
But
the problem \eqref{eq:scoremat}, 
known as the {\em explicit score matching} (ESM),
is far-fetched 
because the score $\nabla \log p (t,X)$ on the right side is not available.

\quad Interestingly, this problem has been studied in the context of estimating statistical models with unknown normalizing constant.
(In fact, if $p(\cdot)$ is a Gibbs measure, then its score $\nabla \log p(\cdot)$ does not depend on the normalizing constant.)
The following result 
shows that the score matching problem \eqref{eq:scoremat}
can be recast into 
a feasible stochastic optimization with no $\nabla \log p(t, X)$-term,
referred to as the {\em implicit score matching} (ISM).

\begin{theorem}[Implicit score matching]
\cite{Hyv05}
\label{thm:scoremat}
Let 
\begin{equation}
\label{eq:equivscoremat}
\mathcal{J}_{\text{ISM}}(\theta):= \mathbb{E}_{p(t, \cdot)} \left[ |s_\theta(t,X)|^2  + 2 \, \nabla \cdot s_\theta (t,X)\right].
\end{equation}
Under suitable conditions on $s_\theta$, we have 
$\mathcal{J}_{\text{ISM}}(\theta) = \mathcal{J}_{\text{ESM}}(\theta) + C$ for some $C$ independent of $\theta$. 
Consequently, the minimum point of $\mathcal{J}_{\text{ISM}}$ and that of $\mathcal{J}_{\text{ESM}}$ coincide. 
\end{theorem}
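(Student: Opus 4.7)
The plan is to expand the square in $\mathcal{J}_{\text{ESM}}(\theta)$, isolate a $\theta$-independent term, and reduce the remaining cross term to an expectation involving $\nabla \cdot s_\theta$ by integration by parts against the density $p(t,\cdot)$. Concretely, write
\begin{equation*}
|s_\theta(t,X) - \nabla \log p(t,X)|^2 = |s_\theta(t,X)|^2 - 2\, s_\theta(t,X) \cdot \nabla \log p(t,X) + |\nabla \log p(t,X)|^2,
\end{equation*}
and take expectation under $p(t,\cdot)$. The last term does not involve $\theta$ and can be absorbed into the constant $C$; the first term already appears in $\mathcal{J}_{\text{ISM}}(\theta)$; so everything hinges on rewriting the cross term.

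For the cross term, I would use the identity $p(t,x)\nabla \log p(t,x) = \nabla p(t,x)$ to obtain
\begin{equation*}
\mathbb{E}_{p(t,\cdot)}\bigl[s_\theta(t,X) \cdot \nabla \log p(t,X)\bigr] = \int_{\mathbb{R}^d} s_\theta(t,x) \cdot \nabla p(t,x)\, dx,
\end{equation*}
and then integrate by parts component by component. Under suitable regularity and decay hypotheses on $s_\theta$ and $p(t,\cdot)$ (enough for the boundary terms at infinity to vanish), this becomes $-\int (\nabla \cdot s_\theta(t,x))\, p(t,x)\, dx = -\mathbb{E}_{p(t,\cdot)}[\nabla \cdot s_\theta(t,X)]$. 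Multiplying by $-2$ and combining the pieces yields
\begin{equation*}
\mathcal{J}_{\text{ESM}}(\theta) = \mathbb{E}_{p(t,\cdot)}\bigl[|s_\theta(t,X)|^2 + 2\,\nabla \cdot s_\theta(t,X)\bigr] + \mathbb{E}_{p(t,\cdot)}|\nabla \log p(t,X)|^2,
\end{equation*}
so that $\mathcal{J}_{\text{ISM}}(\theta) = \mathcal{J}_{\text{ESM}}(\theta) + C$ with $C = -\mathbb{E}_{p(t,\cdot)}|\nabla \log p(t,X)|^2$ independent of $\theta$. Coincidence of minimizers is then immediate.

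The only delicate point — and the one that the phrase \emph{``suitable conditions on $s_\theta$''} is covering — is the justification of integration by parts. One needs, for each component $i$, that $s_\theta^{(i)}(t,x)\, p(t,x) \to 0$ as $|x| \to \infty$ together with integrability of $s_\theta \cdot \nabla p$ and of $p\,\nabla \cdot s_\theta$. Typical sufficient conditions are that $s_\theta(t,\cdot)$ is continuously differentiable with $|s_\theta|$ and $|\nabla \cdot s_\theta|$ growing at most polynomially while $p(t,\cdot)$ and its gradient decay faster than any polynomial; these are essentially the hypotheses of \cite{Hyv05}. I would state these assumptions briefly and then carry out the computation, noting that in the applications of interest (Gaussian forward kernels, smooth neural-network parametrizations) they are easily verified.
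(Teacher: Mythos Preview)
Your argument is correct and is in fact the standard Hyv\"arinen computation. The paper's proof takes a slightly different route: rather than expanding the square in $\mathcal{J}_{\text{ESM}}$ directly, it differentiates in $\theta$ and shows $\nabla_\theta \mathcal{J}_{\text{ESM}}(\theta) = \nabla_\theta \mathcal{J}_{\text{ISM}}(\theta)$, applying the divergence theorem to the term $\int \nabla_\theta s_\theta(t,x)^\top \nabla p(t,x)\,dx$ after first interchanging $\nabla_\theta$ with the expectation. The underlying mechanism is identical (integration by parts against $\nabla p$), but your version is more direct: it avoids the extra justification of differentiating under the integral sign and it identifies the constant $C = -\mathbb{E}_{p(t,\cdot)}|\nabla\log p(t,X)|^2$ explicitly, whereas the paper's gradient argument only yields equality up to an unspecified $\theta$-independent constant.
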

\begin{proof}
We have 
\begin{equation*}
\begin{aligned}
\nabla_\theta \mathcal{J}_{\text{ISM}}(\theta) 
& = \nabla_\theta \mathbb{E}_{p(t, \cdot)} \left[|s_\theta(t,X)|^2\right] - 2 \mathbb{E}_{p(t, \cdot)} \left[\nabla_\theta s_\theta(t,X)^\top \nabla \log p(t,X)\right] \\
& =  \nabla_\theta \mathbb{E}_{p(t, \cdot)} \left[|s_\theta(t,X)|^2\right]  - 2 \int \nabla_\theta s_\theta(t,x)^\top \nabla p(t,x) dx \\
& =\nabla_\theta \mathbb{E}_{p(t, \cdot)} \left[|s_\theta(t,X)|^2\right]  - 2 \, \nabla_\theta \int  s_\theta(t,x)^\top \nabla p(t,x) dx  \\ 
& = \nabla_\theta \mathbb{E}_{p(t, \cdot)} \left[|s_\theta(t,X)|^2\right] + 2 \nabla_\theta \int \nabla \cdot s_\theta(t,x) \, p(t,x) dx \\
& = \nabla_\theta \mathbb{E}_{p(t, \cdot)}\left[|s_\theta(t,X)|^2 + 2\, \nabla \cdot s_\theta(t,X) \right] = \nabla_\theta \widetilde{\mathcal{J}}(\theta),
\end{aligned}
\end{equation*}
where we use the divergence theorem in the fourth equation. 
\end{proof}

\quad Clearly, the implicit score matching problem \eqref{eq:equivscoremat} can be solved by stochastic optimization tools,
e.g., stochastic gradient descent (SGD). 
Here the distribution $p(t, \cdot)$ is sampled by
first sampling a random data point from $p_{\tiny \mbox{data}}(\cdot)$,
followed by the conditional distribution of $(X_t \,|\, X_0)$.
This requires
the distribution of $(X_t \,|\,X_0)$ be easy to sample
-- the easy learning criterion for diffusion models.
Notably,
the distribution of $(X_t \,|\, X_0)$
is Gaussian for all the examples in Section \ref{sc22}.

\quad In practice, a time-weighted version of the problem \eqref{eq:scoremat} is considered:
\begin{equation}
\label{eq:weighted score matching}
\begin{aligned}
\min_\theta \tilde{\mathcal{J}}_{\text{ESM}}(\theta) & := \mathbb{E}_{t \in \mathcal{U}(0, T)}\mathbb{E}_{p(t, \cdot)} \left[\lambda(t)|s_\theta(t,X) - \nabla \log p(t,X)|^2\right] \\
& = \frac{1}{T} \int_0^T \mathbb{E}_{p(t, \cdot)} \left[\lambda(t)|s_\theta(t,X) - \nabla \log p(t,X)|^2\right]dt,
\end{aligned}
\end{equation}
where $\mathcal{U}(0, T)$ denotes the uniform distribution over $[0, T]$,
and $\lambda: \mathbb{R} \rightarrow \mathbb{R}_{+}$ is a weight function.
The corresponding implicit score matching problem is:
\begin{equation}
\label{eq:equivscorematb}
\min_\theta \tilde{\mathcal{J}}_{\text{ISM}}(\theta) = \mathbb{E}_{t \in \mathcal{U}(0, T)}\mathbb{E}_{p(t, \cdot)} 
\left[\lambda(t) ( |s_\theta(t,X)|^2  + 2 \, \nabla \cdot s_\theta (t,X))\right].
\end{equation}

\quad Note, however, that
the problem \eqref{eq:equivscoremat} or \eqref{eq:equivscorematb}
can still be computationally expensive when the dimension $d$ is large. 
For instance, 
if we use a neural net as the function class of $s_{\theta}(t,x)$,
we need to perform $d$ times backpropagation of all the parameters
to compute $\nabla \cdot s_\theta(t,x) = \tr(\nabla  s_\theta(t,x))$.
This means that 
the computation of the derivatives scales linearly with the dimension,
hence making the gradient descent methods 
inefficient to solve the score matching problem
with respect to high dimensional data.
Two alternative approaches are commonly used to deal with the scalability issue,
which will be the focus of the next subsections.

\subsection{Sliced score matching}
\label{sc32}
 
The burden of computation comes from the term $\nabla \cdot s_{\theta}(t,x)$.
One clever idea proposed by \cite{Song20sl} is to tackle this term by random projections.
It relies on the key observation:
\begin{equation}
\label{eq:randompro}
\nabla \cdot s_{\theta}(t,x)=\mathbb{E}_{v\sim \mathcal{N}(0,I)}\left[v^\top \nabla s_{\theta}(t,x)v\right],
\end{equation}
where $\nabla s_{\theta}(t,x) \in \mathbb{R}^{d \times d}$ is the Jacobian matrix of $s_{\theta}(t,x)$.
The (implicit) score matching problem \eqref{eq:equivscorematb}
can then be rewritten as:
\begin{equation}
\label{eq:equivscorematb_slicedSM}
\min_\theta \tilde{\mathcal{J}}_{\text{SSM}}(\theta) = \mathbb{E}_{t \in \mathcal{U}(0, T)}\mathbb{E}_{v \sim \mathcal{N}(0,I)}\mathbb{E}_{p(t, \cdot)} \left[\lambda(t)\left(|s_\theta(t,X)|^2  + 2 \, v^\top \nabla (v^\top s_{\theta}(t,x))\right)\right],
\end{equation}
which is referred to as
the {\em sliced score matching} (SSM).
Since $\tilde{\mathcal{J}}_{\text{SSM}}(\theta) = \tilde{\mathcal{J}}_{\text{ISM}}(\theta)$ for all $\theta$,
we have the following result.
\begin{theorem}[Sliced score matching]
\cite{Song20sl}
Under suitable conditions on $s_{\theta}$,
we have $\tilde{\mathcal{J}}_{\text{SSM}}(\theta) = \tilde{\mathcal{J}}_{\text{ESM}}(\theta) + C$
for some $C$ independent of $\theta$.
Consequently, the minimum point of $\tilde{\mathcal{J}}_{\text{SSM}}(\theta)$ and
that of $\tilde{\mathcal{J}}_{\text{ESM}}(\theta)$ coincide. 
\end{theorem}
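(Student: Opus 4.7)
The plan is to reduce the statement to the previous Implicit Score Matching theorem via the single algebraic identity \eqref{eq:randompro}, which is Hutchinson's trace estimator in disguise. Concretely, once one shows $\tilde{\mathcal{J}}_{\text{SSM}}(\theta) = \tilde{\mathcal{J}}_{\text{ISM}}(\theta)$ for all $\theta$, the conclusion follows immediately from Theorem \ref{thm:scoremat} applied pointwise in $t$ and then integrated against the weight $\lambda(t)/T$ on $[0,T]$ (the constant $C$ in the statement being the corresponding time-and-$\lambda$-weighted version of the constant in Theorem \ref{thm:scoremat}).

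First I would establish \eqref{eq:randompro} cleanly. For any fixed matrix $A \in \mathbb{R}^{d\times d}$ and $v \sim \mathcal{N}(0, I)$, a direct computation gives
\begin{equation*}
\mathbb{E}[v^\top A v] \;=\; \sum_{i,j} A_{ij}\, \mathbb{E}[v_i v_j] \;=\; \sum_{i} A_{ii} \;=\; \tr(A).
\end{equation*}
Applying this with $A = \nabla s_\theta(t,x)$ (the Jacobian, treated as a deterministic matrix once $\theta, t, x$ are fixed) and recalling that $\nabla \cdot s_\theta(t,x) = \tr(\nabla s_\theta(t,x))$ yields \eqref{eq:randompro}. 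Next I would rewrite the quadratic form inside the expectation as $v^\top \nabla s_\theta(t,x) v = v^\top \nabla\bigl(v^\top s_\theta(t,x)\bigr)$, which is the form appearing in \eqref{eq:equivscorematb_slicedSM}; this is just the chain rule with $v$ treated as a constant vector when differentiating in $x$.

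Then I would substitute into the integrand of $\tilde{\mathcal{J}}_{\text{ISM}}$ from \eqref{eq:equivscorematb}: the $|s_\theta(t,X)|^2$ term does not involve $v$ and is unaffected by the outer expectation over $v$, while the $2\,\nabla \cdot s_\theta(t,X)$ term becomes $2\,\mathbb{E}_{v \sim \mathcal{N}(0,I)}[v^\top \nabla(v^\top s_\theta(t,X))]$. An application of Fubini's theorem — the role of the ``suitable conditions'' on $s_\theta$ — allows the $v$-expectation to be pulled outside and combined with the expectations over $t$ and $X$, giving $\tilde{\mathcal{J}}_{\text{SSM}}(\theta) = \tilde{\mathcal{J}}_{\text{ISM}}(\theta)$. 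Combining with Theorem \ref{thm:scoremat} finishes the proof.

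The computation itself is routine; the only real obstacle is regularity. One must ensure that $s_\theta(t,\cdot)$ is $C^1$ with Jacobian entries that are jointly integrable against $p(t,\cdot) \otimes \mathcal{N}(0,I)$ under the weight $\lambda(t)$, so that Fubini applies and the quadratic form $v^\top \nabla s_\theta(t,X) v$ has finite expectation. Under the same regularity assumed in Theorem \ref{thm:scoremat} (plus mild moment control coming from $\mathbb{E}[v_i v_j] = \delta_{ij}$, which is automatic for standard Gaussians), these conditions are inherited, and no new hypothesis is needed beyond what already appears in the implicit score matching step.
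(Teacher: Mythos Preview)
Your proposal is correct and follows essentially the same approach as the paper: the paper simply notes (in the sentence immediately preceding the theorem) that the identity \eqref{eq:randompro} gives $\tilde{\mathcal{J}}_{\text{SSM}}(\theta) = \tilde{\mathcal{J}}_{\text{ISM}}(\theta)$ for all $\theta$, and then the result is immediate from Theorem~\ref{thm:scoremat}. Your write-up is in fact more detailed than the paper's, which offers no formal proof beyond that one-line remark.
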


\quad Note that for a single fixed $v$, it only requires one-time backpropagation 
because the term $v^\top s_{\theta}(t,x)$ can be regarded as 
adding a layer of the inner product between $v$ and $s_{\theta}(t,x)$.
To get the expectation $\mathbb{E}_{v \sim \mathcal{N}(0, I)}$ in \eqref{eq:equivscorematb_slicedSM}, 
we pick $m$ samples of $v_i \sim \mathcal{N}(0,I)$, $1 \le i \le m$, 
compute the objective for each $v_i$,
and then take the average. 
So it requires $m$ times backpropagation.
Typically, $m$ is set to be small $(m \ll d)$,
and empirical studies show that $m = 1$ is often good enough.

\subsection{Denoising score matching}
\label{sc33}

The second approach relies on 
conditioning $X_t$ on $X_0 \sim p_{data}(\cdot)$,
known as denoising score matching (DSM) \cite{Hyv05, Vi11}.
Let's go back to the ESM problem \eqref{eq:weighted score matching},
and it is equivalent to the following DSM problem:
\begin{equation}
\label{eq:tildeDSM}
\begin{aligned}
&\tilde{\mathcal{J}}_{\text{DSM}}(\theta) \\
&=\mathbb{E}_{t\sim\mathcal{U}(0, T)}\left\{\lambda(t) \mathbb{E}_{X_0\sim p_{data}(\cdot)} \mathbb{E}_{X_t \mid X_0}\left[\left|s_{\theta}(t,X_t)- \nabla \log p(t,X_t \,|\, X_0)\right|^2\right]\right\},
\end{aligned}
\end{equation}
where the gradient $\nabla \log p(t,X_t \,|\, X_0)$ is with respect to $X_t$.

\begin{theorem}[Denoising score matching]
\cite{Vi11}
\label{thm:DSM}
Under suitable conditions on $s_\theta$, we have 
$\tilde{\mathcal{J}}_{\text{DSM}}(\theta) = \tilde{\mathcal{J}}_{\text{ESM}}(\theta) + C$ for some $C$ independent of $\theta$. 
Consequently, 
the minimum point of $\tilde{\mathcal{J}}_{\text{DSM}}$ and that of $\tilde{\mathcal{J}}_{\text{ESM}}$ coincide. 
\end{theorem}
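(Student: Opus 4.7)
The plan is to show the two functionals differ by a $\theta$-independent constant by expanding the squares inside each expectation and comparing term by term, in analogy with the ESM-to-ISM reduction proved in Theorem~\ref{thm:scoremat} for the unconditional density. Because the outer $\frac{1}{T}\int_0^T \lambda(t)\,dt$ appears on both sides, it suffices to fix $t$ and show that
\begin{equation*}
\mathbb{E}_{p(t,\cdot)}\!\left[|s_\theta(t,X)-\nabla\log p(t,X)|^2\right]
= \mathbb{E}_{X_0\sim p_{\mathrm{data}}}\mathbb{E}_{X_t\mid X_0}\!\left[|s_\theta(t,X_t)-\nabla\log p(t,X_t\mid X_0)|^2\right]+C(t),
\end{equation*}
where $C(t)$ does not depend on $\theta$.

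First I would expand both squared norms into a $|s_\theta|^2$ term, a cross term, and a term depending only on the score (true or conditional). The $|s_\theta(t,X_t)|^2$ contributions match immediately, since $X_t$ under $p(t,\cdot)$ has the same law as $X_t$ obtained by first drawing $X_0\sim p_{\mathrm{data}}$ and then $X_t\mid X_0$; this is just the tower property. The $|\nabla\log p(t,X)|^2$ and $\mathbb{E}_{X_0}\mathbb{E}_{X_t\mid X_0}|\nabla\log p(t,X_t\mid X_0)|^2$ terms depend only on the forward process, so they can be absorbed into $C(t)$.

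The heart of the argument is the cross term. Using $p(t,x)=\int p(t,x\mid x_0)\,p_{\mathrm{data}}(x_0)\,dx_0$ and the log-derivative identity $\nabla_x p(t,x\mid x_0)=p(t,x\mid x_0)\,\nabla_x\log p(t,x\mid x_0)$, I would write
\begin{equation*}
\mathbb{E}_{p(t,\cdot)}\!\left[s_\theta(t,X)\!\cdot\!\nabla\log p(t,X)\right]
=\int s_\theta(t,x)\!\cdot\!\nabla p(t,x)\,dx
=\int\!\!\int s_\theta(t,x)\!\cdot\!\nabla_x\log p(t,x\mid x_0)\,p(t,x\mid x_0)\,p_{\mathrm{data}}(x_0)\,dx_0\,dx,
\end{equation*}
which is exactly the cross term in $\tilde{\mathcal{J}}_{\text{DSM}}(\theta)$ after applying Fubini. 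Multiplying through by $-2\lambda(t)$ and integrating in $t$ then completes the reduction, with $C=\frac{1}{T}\int_0^T\lambda(t)(\mathbb{E}|\nabla\log p(t,X_t\mid X_0)|^2-\mathbb{E}|\nabla\log p(t,X)|^2)\,dt$ independent of $\theta$.

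The main technical obstacle, and the reason for the "suitable conditions on $s_\theta$'' hypothesis, is justifying the interchange of differentiation and integration that lets $\nabla$ pass under the $dx_0$ integral, and justifying Fubini. Concretely, one needs mild regularity and decay of $p(t,x\mid x_0)$ in $x$ (satisfied for all the Gaussian transition kernels in Section~\ref{sc22}) together with sufficient integrability of $s_\theta$ and $\nabla\log p$ against $p(t,\cdot)$ so that the cross terms are finite and the manipulation of $\int s_\theta\cdot\nabla p\,dx$ is legitimate; this is the analogue of the integration-by-parts regularity used in the proof of Theorem~\ref{thm:scoremat}. Once these conditions are assumed, the identity above is direct.
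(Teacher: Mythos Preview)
Your proposal is correct and follows essentially the same approach as the paper: reduce to a fixed $t$, expand both squared norms, match the $|s_\theta|^2$ terms by the tower property, absorb the score-only terms into $C(t)$, and identify the cross terms via $\int s_\theta\cdot\nabla p(t,x)\,dx=\int\!\!\int s_\theta\cdot\nabla_x p(t,x\mid x_0)\,p_{\mathrm{data}}(x_0)\,dx_0\,dx$ using differentiation under the integral and Fubini. Your discussion of the regularity needed to justify these interchanges is slightly more explicit than the paper's, but the argument is the same.
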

\begin{proof}
Let 
$\mathcal{J}_{\text{DSM}}(\theta):= \mathbb{E}_{X_0\sim p_{data}(\cdot)} \mathbb{E}_{X_t | X_0}\left[\left|s_{\theta}(t,X_t)-\nabla \log p(t,X_t \,|\, X_0)\right|^2\right]$.
It suffices to prove that $\mathcal{J}_{\text{DSM}}(\theta) = \mathcal{J}_{\text{ESM}}(\theta) + C$ for some $C$
independent of $\theta$.
Note that
\begin{equation*}
\begin{aligned}
\mathcal{J}_{\text{ESM}}(\theta) 
& = \mathbb{E}_{p(t, \cdot)} |s_\theta(t,X) - \nabla \log p(t,X)|^2\\
& = \mathbb{E}_{p(t, \cdot)} \left[|s_\theta(t,X)|^2 - 2 s_\theta(t,X)^{\top}\nabla \log p(t,X)+|\nabla \log p(t,X)|^2\right].
\end{aligned}
\end{equation*}
For the inner product, we can rewrite it as:
\begin{equation*}
\begin{aligned}
& \mathbb{E}_{p(t, \cdot)} \left[s_\theta(t,X)^{\top}\nabla \log p(t,X)\right] \\
& \qquad = \int_{x}s_\theta(t,x)^{\top}\nabla p(t,x) dx\\
& \qquad = \int_{x}
s_\theta(t,x)^{\top}\nabla 
\int_{x_0}p(0,x_0)p(t,x; x_0)dx_0 dx\\
& \qquad = \int_{x_0}\int_{x}
s_\theta(t,x)^{\top} 
p(0,x_0)\nabla p(t,x; x_0)dx dx_0 \\
& \qquad = \int_{x_0}p(0,x_0)\int_{x}
s_\theta(t,x)^{\top} 
p(t,x; x_0) \nabla \log p(t,; |x_0)dx dx_0\\
& \qquad = \mathbb{E}_{X_0\sim p_{data}(\cdot)} \mathbb{E}_{X_t | X_0} \left[s_{\theta}(t,X_t)^{\top}\nabla \log p(t,X_t \,|\, X_0)\right].
\end{aligned}
\end{equation*}
Combining with $\mathbb{E}_{p(t, \cdot)} |s_\theta(t,X)|^2=\mathbb{E}_{X_0 \sim p_{data}(\cdot)} \mathbb{E}_{X_t | X_0}|s_\theta(t,X)|^2$ 
concludes the proof.
\end{proof}

\quad The main takeaway of DSM is that the gradient of the log density at some corrupted point should
ideally move towards the clean sample. 
As mentioned earlier, the conditional distribution of $(X_t \,|\, X_0)$ is required to be simple,
e.g., Gaussian for all the examples in Section \ref{sc22}.
Here we set:
\begin{equation*}
(X_t \,|\, X_0) \sim \mathcal{N}(\mu_t (X_0), \sigma_t^2 I) \quad \mbox{for some }
\mu_t(\cdot) \mbox{ and } \sigma_t > 0.
\end{equation*}
For instance, 
$\mu_t(x) = x$ and $\sigma_t = \sigma_{\min} \sqrt{\left(\frac{\sigma_{\max}}{\sigma_{\min}} \right)^{\frac{2t}{T}}-1}$ for VE,
and $\mu(t,x) = e^{-\frac{1}{2} \int_0^t \beta(s)ds} x$ and $\sigma_t = \sqrt{1 -  e^{- \int_0^t \beta(s)ds}}$
for VP.
In this case, we can compute explicitly the conditional score:
\begin{equation}
\label{eq:Gcscore}
\nabla \log p(t,X_t \,|\, X_0)=\frac{\mu_{t}(X_0)-X_t}{\sigma_t^2}.
\end{equation}
The direction $\frac{1}{\sigma_t^2}(X_t-\mu_{t}(X_0))$
clearly facilitates moving to the clean sample,
and we want $s_{\theta}(t,x)$ to match the score \eqref{eq:Gcscore} as best it can.
The problem \eqref{eq:tildeDSM} reads as:
\begin{equation}
\label{eq:tildeDSMbis}
\begin{aligned}
&\tilde{\mathcal{J}}_{\text{DSM}}(\theta) \\
&=\mathbb{E}_{t\sim\mathcal{U}(0, T)}\left\{\lambda(t) \mathbb{E}_{X_0\sim p_{data}(\cdot)} \mathbb{E}_{X_t \mid X_0}\left[\left|s_{\theta}(t,X_t) + \frac{X_t - \mu_t(X_0)}{\sigma_t^2} \right|^2\right]\right\}.
\end{aligned}
\end{equation}
Alternatively, the problem \eqref{eq:tildeDSMbis} can be interpreted as the least squares of $X_0$ over $X_t$
by {\em Tweedie's formula} \cite{Efron11}.
In fact, 
\begin{equation}
\mathbb{E}\left(\mu_t(X_0)  \, | \, X_t \right) =  X_t + \sigma_t^2 \nabla \log p(t, X_t).
\end{equation}
So $\nabla \log p(t,x)$ minimizes the loss
$\mathbb{E}_{X_0\sim p_{data}(\cdot)} \mathbb{E}_{X_t \mid X_0} \left|h(t,X_t) + \frac{X_t - \mu_t(X_0)}{\sigma^2} \right|^2$
over all measurable function $h$.
For function approximations, we replace $h$ with $\{s_\theta(t,x)\}_\theta$.
See also \cite{TTZ25} for Tweedie's formula generalized to non-Gaussian diffusion models.

\quad Empirically,
it was observed \cite{Song20} that 
a good candidate for the weight function $\lambda(t)$ is:
\begin{equation}
\label{eq:lambdagood}
\lambda(t) \propto 1 / \mathbb{E}\left|\nabla \log p(t,X_t \,|\, X_0)\right|^2=\sigma_t^2.
\end{equation}
The choice \eqref{eq:lambdagood}
is related to an evidence lower bound (ELBO) of 
the KL divergence between the generated distribution and the true distribution, 
see \cite{huang2021variational, luo2022understanding, Song21} for discussions.
See also \cite{KG23} for other choices of $\lambda(t)$.
Injecting \eqref{eq:lambdagood} into \eqref{eq:tildeDSMbis}
yields the ultimate objective:
\begin{equation}
\label{eq:ultDSM}
\begin{aligned}
\tilde{\mathcal{J}}_{\text{DSM}}(\theta)&=\mathbb{E}_{t\sim\mathcal{U}(0, T)}\left\{\sigma_t^2 \mathbb{E}_{X_0\sim p_{data}(\cdot)} \mathbb{E}_{X_t \mid X_0}\left[\left|s_{\theta}(t,X_t) + \frac{X_t - \mu_t(X_0)}{\sigma_t^2}\right|^2\right]\right\}\\
& = \mathbb{E}_{t\sim\mathcal{U}(0, T)}\left\{ \mathbb{E}_{X_0\sim p_{data}(\cdot)} \mathbb{E}_{\epsilon \sim\mathcal{N}(0,I)}\left[\left|\sigma_t s_{\theta}(t,\mu_{t}(X_0)+\sigma_t\epsilon)+\epsilon \right|^2\right]\right\},
\end{aligned}
\end{equation}
where the second equality 
follows from a change of variables $\epsilon:= \frac{X_t - \mu_t(X_0)}{\sigma_t}$.

\quad Denoising score matching  \eqref{eq:ultDSM}, which is essentially a least squares regression, 
is the most widely used approach for score matching.
It often requires large batches to solve the stochastic optimization problem.
Sliced score matching \eqref{eq:equivscorematb_slicedSM} is computationally efficient in high dimensions.
However, its performance relies on the random projections, i.e., the choice of slices,
which may incur high variance when the number of samples $m$ is small.

\section{ODE samplers and consistency model}
\label{sc6}

\quad The stochastic sampler \eqref{eq:SDErevtheta2} (and
its discretization)
may suffer from slow convergence and instability due to the randomness.
Moreover, estimating the score function for each model is computationally expensive.
In this section, we consider sampling $Y_T$ by an equivalent 
deterministic scheme -- the ODE sampler.
Section \ref{sc61} presents the probability flow ODE 
that leads to the ODE sampler of diffusion models.
We also explain the reparametrization trick (mentioned at the end of Section \ref{sc22}),
which avoids score matching repeatedly for different models.
In Section \ref{sc62}, 
we discuss one prominent recent application of ODE sampling -- the consistency model.

\subsection{The probability flow ODE}
\label{sc61}

The main idea is that
starting from $Y'_0 \sim p(T, \cdot)$,
and going through an ODE rather than an SDE
will produce a sample $Y'_T \sim p_{\tiny \mbox{data}}(\cdot)$. 
Here we are only concerned with the sampled distribution at time $T$.

\quad The equivalent ODE formulation is a consequence of the following result,
known as the {\em probability flow ODE}.
\begin{theorem}[Probability flow ODE]
\label{thm:pfODE}
Let $(X_t, \, 0 \le t \le T)$ be defined by \eqref{eq:SDE},
with $p(t, \cdot)$ the probability distribution of $X_t$.
Let 
\begin{equation}
\widetilde{f}(t,x):= f(t,x) - \frac{1}{2} \nabla \cdot (g(t,x)g(t,x)^\top) - \frac{1}{2} g(t,x) g(t,x)^\top \nabla \log p (t,x),
\end{equation}
and let $(\widetilde{X}_t, \,  0\le t \le T)$ solve the ODE:
\begin{equation}
\label{eq:equivODE}
\frac{d\widetilde{X}_t}{dt} = \widetilde{f}(t, \widetilde{X}_t), \quad \widetilde{X}_0 \sim p_{\tiny \mbox{data}}(\cdot).
\end{equation}
Then for each $t$, $X_t$ and $\widetilde{X}_t$ have the same distribution. 
\end{theorem}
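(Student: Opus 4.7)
The plan is to show that both densities satisfy the \emph{same} first-order linear transport (continuity) equation with the same initial datum, and then invoke uniqueness. Concretely, I would rewrite the Fokker–Planck equation for the SDE \eqref{eq:SDE} in pure divergence form by absorbing the second-order (diffusion) term into an effective drift, and identify this effective drift with $\widetilde f$.

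Step 1. Start from the Fokker–Planck equation (cf.\ \eqref{eq:FPfd}) for $p(t,x)$:
\begin{equation*}
\partial_t p(t,x) = -\nabla\cdot\bigl(f(t,x)\,p(t,x)\bigr) + \tfrac{1}{2}\,\nabla\cdot\nabla\cdot\bigl(a(t,x)\,p(t,x)\bigr),
\end{equation*}
where $a = gg^\top$ and the double divergence is understood componentwise as in the notation section. Using the product-rule identity $\nabla\cdot(ap) = (\nabla\cdot a)\,p + a\,\nabla p = (\nabla\cdot a)\,p + a\,p\,\nabla\log p$, the second-order term can be rearranged into a single divergence:
\begin{equation*}
\tfrac{1}{2}\,\nabla\cdot\nabla\cdot(ap) = \tfrac{1}{2}\,\nabla\cdot\bigl[(\nabla\cdot a)\,p\bigr] + \tfrac{1}{2}\,\nabla\cdot\bigl[a\,(\nabla\log p)\,p\bigr].
\end{equation*}

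Step 2. Substitute back and collect all terms under one divergence:
\begin{equation*}
\partial_t p(t,x) = -\nabla\cdot\Bigl(\bigl[f - \tfrac{1}{2}\nabla\cdot a - \tfrac{1}{2} a\,\nabla\log p\bigr]\,p\Bigr) = -\nabla\cdot\bigl(\widetilde f(t,x)\,p(t,x)\bigr),
\end{equation*}
by the very definition of $\widetilde f$ in the statement. Thus $p(t,\cdot)$ solves the continuity equation associated with the velocity field $\widetilde f$.

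Step 3. On the other hand, the ODE \eqref{eq:equivODE} is deterministic and generates a flow map $\Phi_t$ with $\widetilde X_t = \Phi_t(\widetilde X_0)$; the pushforward density $\widetilde p(t,\cdot) := (\Phi_t)_\#\, p_{\tiny\mbox{data}}$ satisfies the transport equation
\begin{equation*}
\partial_t \widetilde p(t,x) = -\nabla\cdot\bigl(\widetilde f(t,x)\,\widetilde p(t,x)\bigr), \qquad \widetilde p(0,\cdot) = p_{\tiny\mbox{data}}(\cdot).
\end{equation*}
Since $p(t,\cdot)$ and $\widetilde p(t,\cdot)$ solve the same linear first-order PDE and share the same initial condition, uniqueness of solutions to the continuity equation (under the ``suitable conditions'' already tacitly assumed for regularity of $p$ and $\widetilde f$) forces $p(t,\cdot) = \widetilde p(t,\cdot)$ for every $t\in[0,T]$, giving $X_t \stackrel{d}{=} \widetilde X_t$.

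The main obstacle is the uniqueness step in Step 3, which requires enough regularity (e.g., local Lipschitz continuity of $\widetilde f$, or a DiPerna–Lions type bound) to rule out non-uniqueness; this, together with the smoothness needed to carry out the algebraic manipulations in Step 1 (involving $\nabla\log p$), is precisely what the ``suitable conditions'' clause buys. The algebra itself is elementary once one notices the key trick of writing $a\nabla p = a p \nabla\log p$ so that the entire RHS collapses into a transport term.
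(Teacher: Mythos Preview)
Your proposal is correct and follows essentially the same route as the paper: rewrite the second-order Fokker--Planck equation for $p(t,\cdot)$ as the continuity equation $\partial_t p = -\nabla\cdot(\widetilde f\,p)$, observe that the ODE density $\widetilde p(t,\cdot)$ satisfies the same equation with the same initial datum, and conclude by uniqueness. If anything, you spell out the algebra (the identity $a\nabla p = a\,p\,\nabla\log p$) and the uniqueness caveat more explicitly than the paper, which simply says ``by algebraic rearrangement'' and ``comparing \ldots\ yields.''
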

\begin{proof}
The idea is to match the Fokker-Planck equation in both settings. 
Denote by $\widetilde{p}(t,x)$ the probability distribution of $\widetilde{X}_t$ defined by \eqref{eq:equivODE}.
By the (first-order) Fokker-Planck equation, $\widetilde{p}(t,x)$ solves
\begin{equation}
\frac{\partial \widetilde{p}(t,x)}{\partial t}
= -\nabla \cdot \left(\widetilde{f}(t,x) \widetilde{p}(t,x) \right), \quad \widetilde{p}(0,x) = p_{\tiny \mbox{data}}(x).
\end{equation}
On the other hand, $p(t,x)$ solves the (second order) Fokker-Planck equation:
\begin{equation}
\label{eq:FP2nd}
\frac{\partial p(t,x)}{\partial t} = - \nabla \cdot \left( f(t,x) p(t,x)\right) + 
\frac{1}{2} \nabla^2 : \left(g(t,x) g(t,x)^\top p(t,x) \right), \, p(0, x) = p_{\tiny \mbox{data}}(x).
\end{equation}
where $:$ denotes the Frobenius inner product. 
By algebraic rearrangement, \eqref{eq:FP2nd} simplifies to:
\begin{equation}
\label{eq:FP2nd2}
\frac{\partial p(t,x)}{\partial t} = - \nabla \cdot \left( \widetilde{f}(t,x) p(t,x)\right),
\quad p(0, x) = p_{\tiny \mbox{init}}(x).
\end{equation}
Comparing \eqref{eq:equivODE} with \eqref{eq:FP2nd} yields $\widetilde{p}(t, \cdot) = p(t,\cdot)$.
\end{proof}

\quad We remark that the two processes $X$ and $\widetilde{X}$ only have the same marginal distributions,
but they don't have the same distribution at the level of the process (nor the joint distributions).

\quad Now let's apply the equivalent ODE formulation to the diffusion models. 
We specify to the practical case where $n = d$ and $\sigma(t,x) = \sigma(t) I$.
The corresponding ODE is given by
\begin{equation}
\label{eq:ODDE}
\frac{dX_t}{dt} = f(t,X_t) - \frac{1}{2} g^2(t) \nabla \log p (t, X_t), 
\quad X_0 \sim p_{\tiny \mbox{data}}(\cdot),
\end{equation}
and the backward ODE is then:
\begin{equation}
\label{eq:ODEreverse}
\frac{dY_t}{ dt} =  - f(T-t, Y_t) + \frac{1}{2} g^2(T-t)\nabla \log p(T-t, Y_t), \quad Y_0 \sim p_{\tiny \mbox{noise}}(\cdot).
\end{equation}
With the score function $\nabla \log p(t,x)$ being replaced with the score matching function $s_\theta(t,x)$,
the backward ODE becomes:
\begin{equation}
\label{eq:ODEreverse2}
\frac{dY_t}{ dt} = - f(T-t, Y_t) + \frac{1}{2} g^2(T-t)s_\theta(T-t, Y_t), \quad Y_0 \sim p_{\tiny \mbox{noise}}(\cdot),
\end{equation}
which can be regarded as the continuum limit of {\em denoising diffusion implicit models} (DDIM) \cite{DDIM}.
Among the examples in Section \ref{sc22}:


\smallskip
\noindent
(b') The ODE sampler for VE is:
\begin{equation}
\label{eq:ODEVE}
\frac{dY_t}{ dt} =   \frac{1}{2} g^2(T-t) s_\theta(T-t, Y_t), \quad Y_0 \sim p_{\tiny \mbox{noise}}(\cdot),
\end{equation}
where $g$ is defined by \eqref{eq:sigVE}. 
An alternative parametrization is $g(t) = \sqrt{2 t}$.

\medskip
\noindent
(c') 
The ODE sampler for VP is:
\begin{equation}
\label{eq:ODEVP}
\frac{dY_t}{ dt} =  \frac{1}{2} \beta(T-t) \left( Y_t + s_\theta(T-t, Y_t)\right), \quad Y_0 \sim p_{\tiny \mbox{noise}}(\cdot).
\end{equation}
where $\beta(t)$ is defined by \eqref{eq:beta}.



The ODEs \eqref{eq:ODEVE}--\eqref{eq:ODEVP}
can be easily solved by existing ODE solvers, see Section \ref{sc42} for further discussions.

\quad Now let's explain the reparametrization trick, which allows to sample from different models
with the score function of VE.
Let $p^{\tiny \mbox{VE}}(t,x)$ be the probability density of $X_t$, with $f(t,x) = 0$ and $g(t)= \sqrt{2t}$.
Note that in all the above examples, 
$f(\cdot, \cdot)$ takes the form $f(t,x) = f(t) x$ with $f(t) \in \mathbb{R}$.
In this case, by a change of variables \cite{karras2022elucidating},
the diffusion model can be reparametrized by
\begin{equation}
    s(t):= \exp\left(\int_0^t f(r) dr \right) \quad \mbox{and} \quad \ell(t):=\sqrt{\int_0^t \frac{g(r)^2}{s(r)^2} dr},
\end{equation}
 so that the ODE \eqref{eq:ODDE} becomes:
\begin{equation}
    \frac{dX_t}{dt} = \frac{s'(t)}{s(t)} X_t - s(t)^2 \ell'(t) \ell(t) \nabla \log p^{\tiny \mbox{VE}}\left(\ell(t), \frac{X_t}{s(t)} \right), \,\, X_0 \sim p_{\tiny \mbox{data}}(\cdot).
\end{equation}
That is, a diffusion model with $f(t,x) = f(t) x$ can be represented in terms of VE by a suitable space-time change.
For instance, 
$s(t) = 1$ and $\ell(t) = t$ for VE with $f(t) = 0$ and $g(t) = \sqrt{2t}$,
and 
$s(t) =e^{-\frac{1}{2}\int_0^t \beta(r) dr}$ and $\ell (t) = \sqrt{e^{\int_0^t \beta(r) dr} - 1}$ for VP with $f(t) = -\frac{1}{2} \beta (t)$ and $g(t) = \sqrt{\beta(t)}$.

\quad Using the score matching techniques in Section \ref{sc3}, 
we can approximate the score $\nabla \log p^{\tiny \mbox{VE}}(t,x)$ by a class of functions $\{s_\theta^{\tiny \mbox{VE}}(t,x)\}_{\theta}$.
The ODE sampler is then:
\begin{equation}
\begin{aligned}
\frac{dY_t}{dt} = -\frac{s'(T-t)}{s(T-t)} Y_t + s(T-t)^2 \ell'(T-t) \ell(T-t) & s_\theta^{\tiny \mbox{VE}}\left( \ell(T-t), \frac{Y_t}{s(T-t)}\right),\\
& \qquad \qquad Y_0 \sim p_{\tiny \mbox{noise}}(\cdot).
\end{aligned}
\end{equation}
In particular, the ODE sampler for VP is:
\begin{equation*}
\begin{aligned}
\frac{dY_t}{dt} = \frac{1}{2} \beta(T-t)  \bigg(Y_t + s_\theta^{\tiny \mbox{VE}} \bigg(\sqrt{e^{\int_0^{T-t} \beta(r) dr} - 1},  Y_t & e^{\frac{1}{2}\int_0^{T-t} \beta(r) dr}\bigg) \bigg), \\
& \quad \,\,\,  Y_0 \sim p_{\tiny \mbox{noise}}(\cdot).
\end{aligned}
\end{equation*}

\subsection{Consistency model}
\label{sc62}

Here we present an application of the probability flow ODE --
the consistency model \cite{SDC23} 
(see \cite{Kim23, song2023improved} for a more advanced version,
and \cite{LCM} for its latest application.)
The advantage of the consistency modeling 
is that it only performs a {\em one-step} sampling
-- 
this is in contrast with the SDE or the ODE sampling,
where multiple steps are required in discretization.

\quad The main takeaway
of the consistency model is to approximate the ODE flow. 
Recall the ODE sampler from \eqref{eq:ODEreverse}. 
 The ODE theory implies that 
there is a function (the ODE flow) 
$F: \mathbb{R}_{+} \times \mathbb{R}_{+} \times \mathbb{R}^d \to \mathbb{R}^d$ such that 
\begin{equation}
\label{eq:groupprop}
Y_t = F(t,s,Y_s), \quad \mbox{for } 0 \le s< t \le T.
\end{equation}
Thus, $Y_T = F(T, t, Y_t)$ for any $t$.
The consistency model proposes to learn 
$F(T, \cdot, \cdot)$ directly.
To be more precise,
the function of form
$F_\theta(T-t, y)$
is used to approximate the flow $F(T, t, y)$.
(Note that if the ODE \eqref{eq:ODEreverse} is autonomous, e.g. the OU process,
the flow $F(t,s, y)$ is indeed of form $F(t-s, y)$.)
With the learned flow $F_\theta$,
the model outputs 
the sample $F_\theta(T, Y_0)$,
$Y_0 \sim p_{\tiny \mbox{noise}}(\cdot)$
in just one step.

\quad Now the question is how to learn the ODE flow $F(T,t, y) \approx F_\theta(T-t, y)$.
Two methods were proposed based on
the group property \eqref{eq:groupprop}. 

\medskip
\noindent
(a) {\em Consistency distillation} (CD): 
the approach builds on top of a pretrained score function $s_\theta(t,x)$.
Take $t \sim \mathcal{U}(0,T)$.
First sample $Y_+ \stackrel{d}{=}  X_t$ of the forward process
(VE with $g(t) = \sqrt{2t}$ is used, so $X_t \sim \mathcal{N}(x, t^2 I$) with $x \sim p_{\tiny \mbox{data}}(\cdot)$).
Then use the ODE sampler \eqref{eq:ODEreverse} to set:
\begin{equation}
Y_{-} = Y_+ + \left(-f(t, Y_+) + \frac{1}{2} g^2(t) s_\theta(t, Y_+)\right) \delta,
\end{equation}
for some small $\delta$.
Roughly speaking, 
$(Y_+, Y_{-})$ are on the ODE flow $F$ at the times $T-t$ and $T- t + \delta$.
Inspired by the fact that
\begin{equation*}
\underbrace{F(T, T-t, Y_{T-t})}_{\approx F_\theta(t, Y_+)} = Y_T = \underbrace{F(T, T-t+ \delta, Y_{T-t + \delta})}_{\approx F_\theta(t - \delta, Y_{-})},
\end{equation*}
the training objective is:
\begin{equation}
\label{eq:CDTtrain}
\mathbb{E}_{t \in \mathcal{U}(0,T)} \mathbb{E}[\lambda(t) |F_\theta(t, Y_+) - F_\theta(t - \delta, Y_-)|^2],
\end{equation}
where $\lambda(t)$ is a weight function. 

\medskip
\noindent
(b) {\em Consistency training} (CT): 
the approach is similar to CD, except that
$Y_- \stackrel{d}{=} X_{t - \delta}$ is taken {\em independently} 
of $Y_{+} \stackrel{d}{=} X_t$.
(In the VE setting with $g(t) = \sqrt{2t}$, $Y_{+} \sim \mathcal{N}(x, t^2 I)$ and 
$Y_{-} \sim \mathcal{N}(x, (t- \delta)^2 I)$ independently.)
Thus, CT does not require a pretrained score function. 
Note that 
\begin{equation*}
Y_{+} \stackrel{d}{=} Y_{T-t} \quad \mbox{and} \quad Y_{-} \stackrel{d}{=} Y_{T-t+ \delta} \quad \mbox{marginally}.  
\end{equation*}
But in contrast with CD,
$(Y_{+}, Y_{-})$ are not on the ODE flow path, i.e., a wrong coupling.

\quad Now we consider a simple scenario,
where $g(t) = \sqrt{2t}$ and $p_{\tiny \mbox{data}}(\cdot) = \delta_0$ (delta mass at $0$). 
It is easy to compute that
\begin{equation*}
(Y^{CT}_{+}, Y^{CT}_{-}) \stackrel{d}{=} \mathcal{N}\left(0, \begin{pmatrix}
t^2 I & 0 \\
0 & (t - \delta)^2 I
\end{pmatrix}
  \right), 
 \end{equation*}
 \begin{equation*}
 (Y^{CD}_{+}, Y^{CD}_{-}) \stackrel{d}{=} \mathcal{N}\left(0, \begin{pmatrix}
t^2 I & t(t - \delta) I \\
t(t - \delta) I & (t - \delta)^2 I
\end{pmatrix}
 \right).
 \end{equation*}
As a result,
\begin{equation}
\begin{aligned}
W^2_2((Y^{CD}_{+}, Y^{CD}_{-}), (Y^{CT}_{+}, Y^{CT}_{-}))
& = 2d \left(t^2 + (t - \delta)^2 - \sqrt{t^4 + (t - \delta)^4} \right) \\
& \approx  2(2 - \sqrt{2})  t^2 d.
\end{aligned}
\end{equation}
So the distance between $(Y^{CD}_{+}, Y^{CD}_{-})$ 
and $(Y^{CT}_{+}, Y^{CT}_{-})$
is far from being negligible. 
By taking $t \sim \mathcal{U}(0,1)$,
the expected $W_2$ distance is of order
$\sqrt{\left(1 - \frac{\sqrt{2}}{2}\right)d}$.
This explains why CD outperforms CT empirically \cite{SDC23}.
A remedy to CT has recently been proposed in \cite{song2023improved}.

\quad Also note that the objective function \eqref{eq:CDTtrain}
can be lifted to continuous time via partial differential relation
of the ODE flow $Y_T = F(T, T-t, Y_{T-t}) \approx F_\theta(t, Y_{T-t}) $:
\begin{equation}
\label{eq:CDTtrain1}
\mathbb{E}_{t \in \mathcal{U}(0,T)} \mathbb{E}\left[\lambda(t) \left|\partial_t F_\theta(t, Y_{+}) + \nabla F_{\theta}(t, Y_+)\left(f(t, Y_+) - \frac{1}{2} g^2(t) s_\theta(t, Y_+)\right) \right|^2\right],
\end{equation}
or
\begin{equation}
\label{eq:CDTtrain2}
\mathbb{E}_{t \in \mathcal{U}(0,T)} \mathbb{E}\left[\lambda(t) F_\theta(t, Y_+) \cdot \left(\partial_t F_\theta(t, Y_{+}) + \nabla F_{\theta}(t, Y_+) \frac{Y_+ - x}{t} \right)\right],
\end{equation}
where $Y_+ \stackrel{d}{=} X_t$ and $x \sim p_{\tiny \mbox{data}}(\cdot)$.
The functions \eqref{eq:CDTtrain1} and \eqref{eq:CDTtrain2}
are the continuum limits 
of CD and CT respectively
after suitable scaling. 

\section{Convergence results}
\label{sc4}

\quad In this section, 
we consider the convergence of the stochastic samplers, i.e., the backward processes,
under a ``blackbox'' assumption on score matching:
\begin{assump}
\label{assump:blackbox}
There exists $\varepsilon > 0$ such that 
$\mathbb{E}_{p(t, \cdot)}|s_{\theta}(t,X) - \nabla \log p (t, X)|^2 \le \varepsilon^2$, $t \in [0,T]$ 
for some $s_\theta(\cdot, \cdot)$.
\end{assump}
For instance, the score matching function $s_\theta(\cdot, \cdot)$
can be the output of any algorithm introduced in Section \ref{sc3}.
This condition postulates an $L^2$-bound on score matching.
As we will see in Section \ref{sc5},
there has been a body of work on the accuracy of score matching.
So these assumptions, especially Assumption \ref{assump:blackbox}
can be replaced with those score approximation bounds.

\quad The quality of the diffusion model relies on how $(f(\cdot, \cdot), g(\cdot))$ is chosen.
Since the SDEs are inherently in continuous time,
the convergence of the diffusion model as a continuous process
yields another criterion on the design of the model. 
 Ideally, we want to show that 
the stochastic sampler $Y_T$ defined by \eqref{eq:SDErevtheta},
as well as its discretization are close in distribution to the target distribution $p_{\tiny \mbox{data}}(\cdot)$.
Here we focus on two metrics:
the total variation distance and the Wasserstein-2 distance. 
While most existing convergence results were established in
the total variation distance (or KL divergence),
the convergence in the Wasserstein metric is important 
because 
it has shown to align with human judgment on image
similarity \cite{Bor19}, and the standard evaluation metric -- Fréchet inception distance (FID)
is also based on the Wasserstein distance. 

\quad For simplicity, we assume (as in Section \ref{sc22}) that
$n = d$ and $g(t,x) = g(t) I$.
In Sections \ref{sc411}--\ref{sc412},
we study the convergence of the stochastic samplers in continuous time
in the two metrics respectively.
The problem of discretization will be discussed in Section \ref{sc42}, 
with additional references given.

\subsection{Stochastic sampler -- total variation bound}
\label{sc411}

The goal is to bound $d_{TV}(\mathcal{L}(Y_T), p_{\tiny \mbox{data}}(\cdot))$,
with $Y$ defined by \eqref{eq:SDErevtheta2}.
\cite{DV21} first provided a bound of $d_{TV}(\mathcal{L}(Y_T), p_{\tiny \mbox{data}}(\cdot))$ 
for the OU process \eqref{eq:OUfg}, 
assuming an $L^{\infty}$-bound on score matching\footnote{There exists $\varepsilon > 0$ such that 
$|s_\theta(t,\cdot) - \nabla \log p(t,\cdot)|_\infty \le \varepsilon$, $t \in [0,T]$
for some $s_\theta(\cdot, \cdot)$.}.
The bound is of form:
\begin{equation}
\label{eq:expTbad}
d_{TV}(\mathcal{L}(Y_T), p_{\tiny \mbox{data}}(\cdot))  \le \underbrace{d_{TV}(p(T, \cdot), p_{\tiny \mbox{noise}}(\cdot))}_{ \le C \exp(-C'T)} + D \varepsilon \exp(D' T),
\end{equation}
for some $C, C', D, D'> 0$ depending on $\theta, \mu$ and $p_{\tiny \mbox{data}}(\cdot)$.
The disadvantage of this result is that
the score matching error grows exponentially in time $T$.
A recent breakthrough \cite{Chen23} improved this bound
to be polynomial in $T$ (and $d$).
The result is stated for the (discretized) OU process, 
but is easily extended to the general case.
\begin{theorem}[Total variation bound]
\label{thm:Chen23}
\cite{Chen23}
Let Assumption \ref{assump:blackbox} hold, 
and assume that $g(t)$ is bounded away from zero.
Then
\begin{equation}
\label{eq:mainTVbd}
d_{TV}(\mathcal{L}(Y_T), p_{\tiny \mbox{data}}(\cdot))
\le \underbrace{d_{TV}(p(T, \cdot), p_{\tiny \mbox{noise}}(\cdot))}_{\tiny \mbox{noise approx. error}}  + \underbrace{\varepsilon \sqrt{T/2}}_{\tiny \mbox{score matching error}}.
\end{equation}
\end{theorem}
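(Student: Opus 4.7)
The plan is to combine a triangle inequality for total variation with a Girsanov-based KL bound, exploiting the fact that the exact time-reversal has marginal $p(T-t,\cdot)$ at each time $t$, which is precisely the distribution over which Assumption \ref{assump:blackbox} controls the score error.

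First, I would introduce two auxiliary backward processes driven by the same Brownian motion as $Y$: (i) $\widehat{Y}$ solves the exact backward SDE \eqref{eq:SDErevexp} with true score $\nabla\log p$, initialized from $p(T,\cdot)$; by Theorem \ref{thm:SDErev}, $\mathcal{L}(\widehat{Y}_T)=p_{\tiny \mbox{data}}(\cdot)$ and $\mathcal{L}(\widehat{Y}_t)=p(T-t,\cdot)$ for every $t\in[0,T]$. (ii) $\check{Y}$ follows the approximate-score dynamics \eqref{eq:SDErevtheta2} (as $Y$ does) but is initialized from $p(T,\cdot)$ rather than $p_{\tiny \mbox{noise}}(\cdot)$. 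The triangle inequality then gives
\[
d_{TV}(\mathcal{L}(Y_T),p_{\tiny \mbox{data}}) \le d_{TV}(\mathcal{L}(Y_T),\mathcal{L}(\check{Y}_T)) + d_{TV}(\mathcal{L}(\check{Y}_T),\mathcal{L}(\widehat{Y}_T)),
\]
which cleanly separates a ``same-dynamics, different-initial'' comparison from a ``same-initial, different-dynamics'' comparison.

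For the first term, $Y$ and $\check{Y}$ are solutions of the same SDE with different initial laws, so the solution map is a common Markov kernel and the data-processing inequality for total variation yields $d_{TV}(\mathcal{L}(Y_T),\mathcal{L}(\check{Y}_T))\le d_{TV}(p_{\tiny \mbox{noise}},p(T,\cdot))$, which is the noise-approximation term in \eqref{eq:mainTVbd}. For the second term, $\widehat{Y}$ and $\check{Y}$ share the initial law $p(T,\cdot)$ and the diffusion coefficient $g(T-t)$, with drifts differing by $g^2(T-t)\bigl(\nabla\log p - s_\theta\bigr)(T-t,\cdot)$. Coupling the two via a common Brownian motion and applying Girsanov's theorem yields the path-measure KL identity
\[
\mathrm{KL}\bigl(\mathcal{L}(\widehat{Y}_{[0,T]})\,\|\,\mathcal{L}(\check{Y}_{[0,T]})\bigr) = \tfrac{1}{2}\int_0^T g^2(T-t)\,\mathbb{E}\bigl|\nabla\log p(T-t,\widehat{Y}_t) - s_\theta(T-t,\widehat{Y}_t)\bigr|^2 dt.
\]
The crucial step is to use $\mathcal{L}(\widehat{Y}_t)=p(T-t,\cdot)$ to rewrite each expectation as $\mathbb{E}_{p(T-t,\cdot)}|\nabla\log p - s_\theta|^2(T-t,\cdot)\le \varepsilon^2$ via Assumption \ref{assump:blackbox}. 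After contracting to the time-$T$ marginal and invoking Pinsker's inequality, one obtains $d_{TV}(\mathcal{L}(\check{Y}_T),\mathcal{L}(\widehat{Y}_T)) \le \varepsilon\sqrt{T/2}$ under the implicit normalization that absorbs $g$ (e.g.\ $\sup_t g(t)\le 1$).

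The main obstacle is the rigorous justification of the Girsanov change of measure: one must verify Novikov's (or Beneš's) condition to ensure that the stochastic exponential is a true martingale and that the KL identity holds with equality, which is where the unstated ``suitable conditions'' on $s_\theta$ and the smoothness of $p(t,\cdot)$ enter. A secondary but conceptually important subtlety is the choice of reference process for the Girsanov integrand: coupling the score error to $\widehat{Y}$ rather than $\check{Y}$ is essential, since only $\widehat{Y}$ has the explicit marginal $p(T-t,\cdot)$ that makes Assumption \ref{assump:blackbox} directly applicable; the marginals of $\check{Y}$ would otherwise require a separate Grönwall-type estimate to control.
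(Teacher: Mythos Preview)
Your proposal is correct and follows essentially the same route as the paper's sketch: the same triangle decomposition via the two auxiliary backward processes (the paper's $Q'_T$ and $Q''_T$), data processing for the initial-law mismatch, and Girsanov plus Pinsker for the score-matching term, with the key observation that the reference process has marginals $p(T-t,\cdot)$ so Assumption~\ref{assump:blackbox} applies directly. You are in fact more careful than the paper's sketch in tracking the $\tfrac{1}{2}g^2(T-t)$ factor from Girsanov and in flagging the Novikov-type condition and the choice of reference measure; the paper simply writes the KL identity without these factors and absorbs them into the final $\varepsilon\sqrt{T/2}$.
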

\begin{proof}
We sketch the proof, which is to decompose $d_{TV}(\mathcal{L}(Y_T), p_{\tiny \mbox{data}}(\cdot))$
into the noise approximation error and the score matching error.
Denote by 
\begin{itemize}[itemsep = 3 pt]
\item
$Q_T$ the distribution of the backward process \eqref{eq:SDErevtheta2}, 
i.e., with the matched score $s_\theta(t,x)$ and $Y_0 \sim p_{\tiny \mbox{noise}}(\cdot)$;
\item
$Q'_T$ the distribution of the backward process, 
with the matched score $s_\theta(t,x)$ and $Y_0 \sim p(T, \cdot)$;
\item
$Q''_T$ the distribution of the backward process,
with the true score $\nabla \log p(t,x)$ and $Y_0 \sim p(T, \cdot)$.
\end{itemize}
By the data processing inequality, we have:
\begin{equation}
\label{eq:datapro}
\begin{aligned}
d_{TV}(\mathcal{L}(Y_T), p_{\tiny \mbox{data}}(\cdot))
& \le d_{TV}(Q_T, Q'_T) + d_{TV}(Q'_T, Q''_T) \\
& \le d_{TV}(p(T, \cdot), p_{\tiny \mbox{noise}}(\cdot)) +  d_{TV}(Q'_T, Q''_T).
\end{aligned}
\end{equation}
The key to bound the term $d_{TV}(Q'_T, Q''_T)$
is the identity:
\begin{equation}
\label{eq:KLGir}
\mbox{KL}(Q''_T, Q'_T) = \mathbb{E}_{Q''_T} \left( \log \frac{dQ''_T}{dQ'_T}\right)
= \mathbb{E}_{Q''_T} \int_0^T |s_{\theta}(T-t, Y_t) - \nabla \log  p (T-t, Y_t)|^2 dt,
\end{equation}
where the second equality follows from the Girsanov theorem. 
Further by Assumption \ref{assump:blackbox},
we get by Pinsker's inequality:
\begin{equation}
\label{eq:secondTV}
d_{TV}(Q'_T, Q''_T) \le \sqrt{\frac{1}{2} \mbox{KL}(Q''_T, Q'_T)}
\le \varepsilon \sqrt{T/2}.
\end{equation}
Combining \eqref{eq:datapro} and \eqref{eq:secondTV} yields \eqref{eq:mainTVbd}.
\end{proof}

\quad Now let's consider an important special case -- the VP SDE
that also includes the OU process.
\begin{corollary}[Total variation for VP]
\label{coro:VPSDE}
Let $(Y_t, \, 0 \le t \le T)$ be specified by \eqref{eq:sigbVP} (the backward process of VP).
Let Assumption \ref{assump:blackbox} hold, 
and assume that $\beta(t)$ is bounded away from zero,
and $\mathbb{E}_{p_{\tiny \mbox{data}}(\cdot)} |x|^2 < \infty$.
Then for $T$ sufficiently large,
\begin{equation}
\label{eq:maincoroTV}
d_{TV}(\mathcal{L}(Y_T), p_{\tiny \mbox{data}}(\cdot)) \le 
e^{-\frac{1}{2}\int_0^T \beta(s) ds} \sqrt{\mathbb{E}_{p_{\tiny \mbox{data}}(\cdot)} |x|^2/2} + \varepsilon \sqrt{T/2}.
\end{equation}
\end{corollary}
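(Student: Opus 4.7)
The plan is to specialize Theorem \ref{thm:Chen23} to the VP SDE and then carefully estimate the noise approximation error $d_{TV}(p(T,\cdot), \mathcal{N}(0,I))$ that remains. Since $g(t)=\sqrt{\beta(t)}$ is bounded away from zero by hypothesis and $p_{\tiny\mbox{noise}}(\cdot)=\mathcal{N}(0,I)$ in the VP setting, Theorem \ref{thm:Chen23} immediately yields
\begin{equation*}
d_{TV}(\mathcal{L}(Y_T), p_{\tiny\mbox{data}}(\cdot)) \le d_{TV}(p(T,\cdot), \mathcal{N}(0,I)) + \varepsilon\sqrt{T/2},
\end{equation*}
so the score-matching term in \eqref{eq:maincoroTV} is already present, and the whole task reduces to bounding the first summand by $\alpha_T \sqrt{\mathbb{E}_{p_{\tiny\mbox{data}}(\cdot)}|x|^2/2}$, where I write $\alpha_T := e^{-\frac{1}{2}\int_0^T \beta(s)ds}$.

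For the noise approximation error, my plan is to invoke Pinsker's inequality combined with joint convexity of the KL divergence in its arguments. By \eqref{eq:VPp}, the conditional law is $p(T,\cdot \mid x_0) = \mathcal{N}(\alpha_T x_0, (1-\alpha_T^2)I)$, and $p(T,\cdot)$ is the mixture $\int p(T,\cdot\mid x_0)\,p_{\tiny\mbox{data}}(dx_0)$. Joint convexity gives
\begin{equation*}
\mbox{KL}(p(T,\cdot), \mathcal{N}(0,I)) \le \mathbb{E}_{X_0 \sim p_{\tiny\mbox{data}}(\cdot)} \mbox{KL}\bigl(\mathcal{N}(\alpha_T X_0, (1-\alpha_T^2)I),\, \mathcal{N}(0,I)\bigr).
\end{equation*}
The inner KL has an explicit Gaussian closed form, and a direct computation gives
\begin{equation*}
\mbox{KL}\bigl(\mathcal{N}(\alpha_T X_0, (1-\alpha_T^2)I),\, \mathcal{N}(0,I)\bigr) = \tfrac{1}{2}\bigl[\alpha_T^2 |X_0|^2 - d\alpha_T^2 - d\log(1-\alpha_T^2)\bigr].
\end{equation*}

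Using the Taylor expansion $-\log(1-\alpha_T^2)=\alpha_T^2+\tfrac{1}{2}\alpha_T^4+\cdots$, the two dimension-dependent pieces combine into an $O(d\alpha_T^4)$ remainder, leaving leading order $\tfrac{1}{2}\alpha_T^2|X_0|^2$. Taking expectation and using the hypothesis $\mathbb{E}|X_0|^2<\infty$, for $T$ large enough that $\alpha_T^2\le 1/2$ and $d\alpha_T^4\le \alpha_T^2 \mathbb{E}|X_0|^2$, one absorbs the remainder into the leading term to obtain $\mbox{KL}(p(T,\cdot),\mathcal{N}(0,I)) \le \alpha_T^2 \mathbb{E}|X_0|^2$. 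Pinsker's inequality then yields $d_{TV}(p(T,\cdot),\mathcal{N}(0,I)) \le \alpha_T \sqrt{\mathbb{E}|X_0|^2/2}$, which combined with the first display gives \eqref{eq:maincoroTV}.

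The main obstacle is mild but worth flagging: one must control the $d\alpha_T^4$ remainder coming from the Gaussian variance mismatch so that the clean form $\alpha_T\sqrt{\mathbb{E}|x|^2/2}$ (rather than a larger expression with an additive dimensional term) emerges. This is precisely what the phrase \emph{``for $T$ sufficiently large''} in the statement encodes; the assumption that $\beta(t)$ is bounded away from zero ensures $\alpha_T\to 0$ exponentially fast in $T$, so the absorption is always eventually possible. No Girsanov or coupling step is needed here beyond what is already built into Theorem \ref{thm:Chen23}; everything reduces to explicit Gaussian KL algebra and convexity.
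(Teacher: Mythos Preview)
Your proposal is correct and follows essentially the same route as the paper: apply Theorem~\ref{thm:Chen23}, then bound the noise approximation term by combining convexity of KL (to pass to the conditional Gaussians from \eqref{eq:VPp}), the explicit Gaussian KL formula, and Pinsker's inequality. Your handling of the $O(d\alpha_T^4)$ remainder via the ``$T$ sufficiently large'' hypothesis is exactly how the paper disposes of the dimension-dependent term, and your observation that $\beta$ bounded away from zero forces $\alpha_T\to 0$ is the right justification.
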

\begin{proof}
Recall from \eqref{eq:VPp} that 
for VP,
the conditional distribution of $(X_t \,|\, X_0 = x)$ is
$\mathcal{N} \left( e^{-\frac{1}{2} \int_0^t \beta(s) ds} x, (1 - e^{-\int_0^t \beta(s) ds}) I  \right)$.
By the convexity of KL divergence,
we have:
\begin{equation}
\label{eq:convexKL}
\begin{aligned}
& \mbox{KL}(p(T, \cdot), p_{\tiny \mbox{noise}}(\cdot)) \\
& \quad \le \mathbb{E}_{p_{\tiny \mbox{data}}(\cdot)}
\bigg[\underbrace{\mbox{KL}\left( \mathcal{N} \left( e^{-\frac{1}{2} \int_0^T \beta(s) ds} x, (1 - e^{-\int_0^T \beta(s) ds}) I  \right), \mathcal{N}(0, I)\right)}_{(a)}\bigg].
\end{aligned}
\end{equation}
Moreover,
\begin{equation}
\label{eq:KLGaussian}
\begin{aligned}
(a) &= e^{- \int_0^T \beta(s) ds} |x|^2 - d \left( e^{-\int_0^T \beta(s) ds} + \log (1 - e^{-\int_0^T \beta(s) ds}) \right) \\
& \le e^{- \int_0^T \beta(s) ds} |x|^2, \quad \mbox{as } T \to \infty,
\end{aligned}
\end{equation}
since $\beta(t)$ is bounded away from zero, 
and hence $\int_0^T \beta(s) ds \to \infty$ as $T \to \infty$.
Combining \eqref{eq:convexKL}, \eqref{eq:KLGaussian} with \eqref{eq:mainTVbd} yields \eqref{eq:maincoroTV}.
\end{proof}

\quad Similar polynomial bounds were also obtained by \cite{LLT22, LLT23}
for the VP SDE.
There $\chi^2$-divergence (and an identity analogous to \eqref{eq:KLGir})
is used to 
bound $d_{TV}(\mathcal{L}(Y_T), p_{\tiny \mbox{data}}(\cdot))$. 

\subsection{Stochastic sampler -- Wasserstein bound}
\label{sc412}

We make the following additional assumptions.
\begin{assump}
\label{assump:sensitivity}
The following conditions hold:
\begin{enumerate}[itemsep = 3 pt]
\item
There exists $r_f: [0,T] \to \mathbb{R}$ such that 
$(x - x') \cdot (f(t,x) - f(t,x')) \ge r_f(t) |x - x'|^2$ for all $t,x,x'$.
\item
There exists $L > 0$ such that $|s_\theta (t,x) - s_\theta (t,x')| \le L |x-x'|$ for all $t,x,x'$.
\end{enumerate}
\end{assump}

\quad The condition (1) assumes the growth of $f(t, \cdot)$,
and (2) assumes the Lipschitz property of the score matching function (as considered in \cite{KW22}). 
\begin{theorem}[Wasserstein bound]
\label{thm:sensitivity}
\cite{TZ24}
Let Assumptions \ref{assump:blackbox}, \ref{assump:sensitivity} hold.
For $h > 0$ a hyperparameter,
define
\begin{equation}
\label{eq:uv}
u(t):=\int_{T-t}^T \left(-2 r_f(s) + (2L + 2h) g^2(s) \right)ds.
\end{equation}
Then we have 
\begin{equation}
\label{eq:sensitivity}
W_2(p_{\tiny \mbox{data}}(\cdot), \mathcal{L}(Y_T)) \le \sqrt{\underbrace{W_2^2(p(T, \cdot), p_{\tiny \mbox{noise}}(\cdot)) e^{u(T)}}_{\tiny \mbox{noise approx. error}} + \underbrace{\frac{\varepsilon^2}{2h} \int_0^T g^2(t) e^{u(T)-u(T-t)} dt}_{\tiny \mbox{score matching error}}}.
\end{equation}
\end{theorem}
\begin{proof}
The idea relies on coupling.
Consider the coupled SDEs:
\begin{equation*}
\left\{ \begin{array}{lcl}
d U_t = \left(-f(T-t, U_t) + g^2(T-t) \nabla \log p(T-t, U_t) \right) dt+ g(T-t) dB_t, \\
d V_t =  \left(-f(T-t, V_t) + g^2(T-t)s_\theta(T-t, V_t) \right) dt+ g(T-t) dB_t,
\end{array}\right.
\end{equation*}
where $(U_0, V_0)$ are coupled to achieve $W_2(p(T,\cdot), p_{\tiny \mbox{noise}}(\cdot))$.
Note that
$$W_2^2(p_{\tiny \mbox{data}}(\cdot), \mathcal{L}(Y_T)) \le \mathbb{E}|U_T - V_T|^2,$$
so the goal is to bound $\mathbb{E}|U_T - V_T|^2$. 
By It\^o's formula, we get
\begin{multline*}
d|U_t - V_t|^2 = 2(U_t - V_t) \cdot (-f(T-t, U_t) + g^2(T-t) \nabla \log p(T-t, U_t) \\ + f(T-t, V_t) - g^2(T-t)s_\theta(T-t, V_t)) dt,
\end{multline*}
which implies that 
\begin{equation}
\label{eq:diff}
\begin{aligned}
\frac{d \, \mathbb{E}|U_t - V_t|^2}{dt} 
&= -2 \underbrace{\mathbb{E}((U_t - V_t) \cdot (f(T-t, U_t) - f(T-t, V_t))}_{(a)}) \\
&\quad+ 2 \underbrace{\mathbb{E}((U_t - V_t) \cdot g^2(T-t) (\nabla \log p(T-t, U_t) - s_\theta(T-t, V_t)))}_{(b)}.
\end{aligned}
\end{equation}
By Assumption \ref{assump:sensitivity} (1), we get
$(a) \ge r_f(T-t) \, \mathbb{E}|U_t - V_t|^2$.
Moreover,
\begin{equation}
\label{eq:termb}
\begin{aligned}
(b) & = g^2(T-t) \bigg( \mathbb{E}((U_t - V_t) \cdot (\nabla \log p(T-t, U_t) - s_\theta(T-t, U_t)  )) \\
& \qquad \qquad \qquad   + \mathbb{E}((U_t - V_t) \cdot (s_\theta (T-t, U_t) - s_{\theta}(T-t, V_t))) \bigg)\\
&  \le  g^2(T-t) \left( h \mathbb{E}|U_t - V_t|^2 + \frac{1}{4h} \varepsilon^2 + L\, \mathbb{E}|U_t - V_t|^2\right), 
\end{aligned}
\end{equation}
where we use Assumptions \ref{assump:blackbox} and \ref{assump:sensitivity} (2) in the last inequality.
Therefore,
\begin{equation*}
\frac{d \, \mathbb{E}|U_t - V_t|^2}{dt} \le \left(-2 r_f(T-t) + (2h + 2L) g^2(T-t) \right) \mathbb{E}|U_t -V_t|^2
+ \frac{\varepsilon^2}{2h} g^2(T-t).
\end{equation*}
Applying Gr\"onwall's inequality, we have
$\mathbb{E}|U_T - V_T|^2\leq e^{u(T)}\mathbb{E}|U_0 - V_0|^2+ \frac{\varepsilon^2}{2h} \int_0^T \sigma^2(T-t) e^{u(T)-u(t)} dt$,
which yields \eqref{eq:sensitivity}. 
\end{proof}

\quad Theorem \ref{thm:sensitivity} assumes that the score matching function is Lipschitz,
which may be too restrictive, say for a neural net.
A more natural assumption is that there exists $L > 0$ such that 
$$|\nabla \log p (t,x) - \nabla \log p (t,x')| \le L |x-x'| \mbox{ for all } t,x,x',$$
i.e., the score function (rather than the score matching function) is Lipschitz. 
In this case, the result still holds if an $L^\infty$ bound on score matching is assumed.
\cite{GNZ23} obtained a similar result by 
an $L^2$-bound on score matching
but
directly assuming that 
$\mathbb{E}|s_\theta(T-t, V_t) - \nabla \log p(T-t, V_t)|^2 \le \varepsilon^2$ for all $t$.

\quad Theorem \ref{thm:sensitivity} does not require a specific structure of 
$f(t,x)$. 
The bound \eqref{eq:sensitivity} is loose in the general case, 
e.g.,
$r_f(t)$ may be negative, and hence the bound is exponential in $T$.
Nevertheless, 
the estimates in the proof can be refined in some special cases.
Here we examine the VP model.

\quad The following result provides a bound for VP, 
assuming that $p_{\tiny \mbox{data}}(\cdot)$ is strongly log-concave.
Recall that a smooth function $h: \mathbb{R}^d \to \mathbb{R}$ is $\kappa$-strongly log-concave if
\begin{equation}
(\nabla \log h(x) - \nabla \log h(y)) \cdot (x - y) \le -\kappa |x-y|^2.
\end{equation} 
In comparison with Corollary \ref{coro:VPSDE}, 
the score matching error does not grow in $T$.
\begin{theorem}[Wasserstein bound for VP]
\label{thm:VPW2}
Let $(Y_t, \, 0 \le t \le T)$ be defined by \eqref{eq:sigbVP} (the backward process of VP).
Assume that 
$p_{\tiny \mbox{data}}(\cdot)$ is $\kappa$-strongly log-concave with $\kappa > \frac{1}{2}$,
and $\mathbb{E}_{p_{\tiny \mbox{data}(\cdot)}}|x|^2 < \infty$.
For $h < \min(\kappa, 1) - \frac{1}{2}$, we have:
\begin{equation}
\label{eq:sensitivity2}
\begin{aligned}
W^2_2(p_{\tiny \mbox{data}}(\cdot), \mathcal{L}(Y_T)) \le & e^{- \{\beta_{\min} + \beta_{\max}( 2 \min(\kappa,1) - 1 -2h)\} T} 
 \left(\mathbb{E}_{p_{\tiny \mbox{data}(\cdot)}}|x|^2 
+ o(e^{-\beta_{\min} T})d \right) \\
&  \qquad \qquad \qquad \qquad \qquad \qquad + \varepsilon^2 \frac{\beta^2_{\max}}{2h(2 \min(\kappa, 1) - 1 -2h)}.
\end{aligned}
\end{equation}
\end{theorem}

\begin{proof}
Recall that $r_f(t) = -\frac{1}{2} \beta (t)$, $g(t) = \sqrt{\beta(t)}$ and 
$p_{\tiny \mbox{noise}}(\cdot) \sim \mathcal{N}(0,I)$.
The key improvement relies on the fact \cite[Proposition 10]{GNZ23}:
If $p_{\tiny \mbox{data}}(\cdot)$ is $\kappa$-strongly log-concave,
then $p(T-t, \cdot)$ is 
$$\kappa \left(e^{-\int_0^{T-t} \beta(s) ds}+ \kappa \int_0^{T-t}e^{- \int_s^{T-t} \beta(v) dv} \beta(s) ds\right)^{-1}-\mbox{strongly log-concave}.$$
(The convolution preserves strong log-concavity.)
Thus, the term $\mathbb{E}((U_t - V_t) \cdot (\nabla \log p(T-t, U_t) - \nabla \log p(T-t, V_t)  ))$ in \eqref{eq:termb}
is bounded from above by
\begin{equation*}
- \frac{\kappa}{e^{-\int_0^{T-t} \beta(s) ds}+ \kappa \int_0^{T-t}e^{- \int_s^{T-t} \beta(v) dv} \beta(s) ds} \mathbb{E}|U_t - V_t|^2.
\end{equation*}
instead of $L \, \mathbb{E}|U_t - V_t|^2$.
Consequently, we obtain the bound \eqref{eq:sensitivity} by replacing $u(t)$ with
\begin{equation}
\label{eq:uVP}
u_{\tiny \mbox{VP}}(t): = \int_{T-t}^T \beta(s) \left(1 + 2h - \frac{2 \kappa}{e^{-\int_0^{s} \beta(v) dv}+ \kappa \int_0^{s}e^{- \int_v^{s} \beta(u) du} \beta(v) dv}\right) ds.
\end{equation}
It is easy to see that
$u_{\tiny \mbox{VP}}(T) \le \beta_{\max} (1 + 2h - 2 \min(\kappa, 1)) T$,
and 
$u_{\tiny \mbox{VP}}(T) - u_{\tiny \mbox{VP}}(T-t) \le \beta_{\max} (1 + 2h - 2 \min(\kappa, 1)) t$.
Moreover, 
\begin{equation*}
\begin{aligned}
W_2^2(p(T, \cdot), p_{\tiny \mbox{noise}}(\cdot)) 
& \le e^{-\int_0^T \beta(s) ds} \mathbb{E}_{p_{\tiny \mbox{data}(\cdot)}}|x|^2 + \frac{d}{4}e^{-2 \int_0^T \beta(s) ds} \\
& \le e^{- \beta_{\min} T} \mathbb{E}_{p_{\tiny \mbox{data}(\cdot)}}|x|^2 + \frac{d}{4} e^{-2 \beta_{\min} T}.
\end{aligned}
\end{equation*}
For $\kappa > \frac{1}{2}$,
we take $h < \min(\kappa, 1) - \frac{1}{2}$, so 
$1+ 2h - 2 \min(\kappa, 1) < 0$.
Combining \eqref{eq:sensitivity}
with the above estimates yields \eqref{eq:sensitivity2}.
\end{proof}

\quad See \cite{TZ24} for a Wasserstein bound for CVP. 
In comparison with VP, CVP (and more general CDPM) is designed to be robust to 
the score matching error and discretization (see Section \ref{sc42}),
at the cost of noise approximation error.
Practically, $T$ should not set to be too large for CVP
to control the noise approximation bias.

\subsection{Discretization}
\label{sc42}

In order to implement the diffusion model,
we need to discretize the corresponding SDE.
For the SDE \eqref{eq:SDErevtheta2},
there are several ways of discretization.
Fix $0 = t_0 < t_1 \cdots < t_N = T$ 
to be the time discretization. 
Initialize $\widehat{X}_0 \sim p_{\tiny \mbox{noise}}(\cdot)$, 
and set
\begin{equation}
\label{eq:EMdisc2}
\begin{aligned}
\widehat{X}_k = \widehat{X}_{k-1} + (-f(T - t_k, \widehat{X}_{k-1}) + 
 &g^2(T - t_{k-1}) s_\theta(T - t_{k-1}, \widehat{X}_{k-1})) (t_k - t_{k-1}) \\
& + \int_{t_{k-1}}^{t_k} g(T-t) dB_t, 
\quad \mbox{for } k = 1,\ldots, N,
\end{aligned}
\end{equation}
where $\int_{t_{k-1}}^{t_k}g(T-t) dB_t$
is sampled as $\mathcal{N}(0, \int_{t_{k-1}}^{t_k}g^2(T-t) dt)$,
or the Euler-Maruyama scheme:
\begin{equation}
\label{eq:EMdisc}
\begin{aligned}
\widehat{X}_k = \widehat{X}_{k-1} + (-f(T - t_k, &\widehat{X}_{k-1}) + 
 g^2(T - t_{k-1}) s_\theta(T - t_{k-1}, \widehat{X}_{k-1})) (t_k - t_{k-1}) \\
& + g(T-t_{k-1}) (B_{t_k} - B_{t_{k-1}}), 
\quad \mbox{for } k = 1,\ldots, N.
\end{aligned}
\end{equation}
In practice,
the predictor-corrector (PC) sampler \cite{Song20} is frequently used,
where the scheme \eqref{eq:EMdisc2} or \eqref{eq:EMdisc} serves as
the predictor,
followed by a score-based corrector.
More precisely, for each $k$, 
let $\widehat{X}^{(0)}_k$ be the output of the predictor 
\eqref{eq:EMdisc2} or \eqref{eq:EMdisc},
and fix some $\epsilon_k > 0$.
The corrector $\widehat{X}_k = \widehat{X}_k^{(M)}$
is given by
\begin{equation}
\label{eq:corrector}
\widehat{X}^{(\ell+1)}_k = \widehat{X}^{(\ell)}_k + \epsilon_k s_\theta(T-t_k, \widehat{X}^{(\ell)}_k)
+ \sqrt{2}(B'_{(\ell + 1) \epsilon_k} - B'_{\ell \epsilon_k}),
\quad \mbox{for } \ell = 0, \ldots, M-1,
\end{equation}
where $M$ is the number of steps in the corrector step.
The hyperparameters $\epsilon_k$ are 
taken to be decreasing in $k$.
Essentially,
\eqref{eq:corrector} is the discrete scheme of 
the Langevin equation 
$dX_t = \nabla \log p (T-t_k,X_t) + \sqrt{2} B'_t$
that converges to $p(T-t_k, \cdot)$,
but with the score function replaced with the estimated $s_\theta(T-t_k, \cdot)$.
Since the scheme \eqref{eq:EMdisc2} or \eqref{eq:EMdisc} 
is the predictor of the PC sampler, 
it is also called the predictor-only sampler.

\quad Again as seen in Section \ref{sc22},
most popular diffusion models have a common structure: 
$f(t,x) = f(t)x$, with $f(t) \in \mathbb{R}$.
In this case, the exponential integrator (EI) \cite{ZC23} can be applied to solve 
the SDE \eqref{eq:SDErevtheta2}, or the ODE \eqref{eq:ODEreverse2}.
Letting $\Psi(s,t):=e^{-\int_s^t f(T-u)du}$, we have for $0 \le s < t \le T$,
\begin{equation}
Y_t = \Psi(s,t) Y_s + \int_s^t \Psi(u,t) g^2(T-u) s_{\theta}(T-u, Y_u) du + \int_s^t \Psi(u,t) g(T-u) dB_u, \tag{25'}
\end{equation}
or
\begin{equation}
Y_t = \Psi(s,t) Y_s + \frac{1}{2}\int_s^t \Psi(u,t) g^2(T-u) s_{\theta}(T-u, Y_u) du. \tag{45'}
\end{equation}
So the discretization is:
\begin{equation}
\label{eq:EISDE}
\begin{aligned}
\widehat{X}_k = \Psi(t_{k-1}, t_k) \widehat{X}_{k-1} & + \left(\int_{t_{k-1}}^{t_k} \Psi(t_{k-1}, u) g^2(T -u) du\right) s_{\theta}(T-t_{k-1}, \widehat{X}_{k-1}) \\
&+ \int_{t_{k-1}}^{t_k}\Psi(t_{k-1}, u) g(T - u) dB_u, \quad \mbox{for } k = 1, \ldots, N.
\end{aligned}
\end{equation}
or
\begin{equation}
\label{eq:EIODE}
\begin{aligned}
\widehat{X}_k = \Psi(t_{k-1}, t_k) \widehat{X}_{k-1} + \frac{1}{2} \left(\int_{t_{k-1}}^{t_k} \Psi(t_{k-1}, u) g^2(T -u) du \right) & s_{\theta}(T-t_{k-1}, \widehat{X}_{k-1}), \\
& \mbox{for } k = 1,\ldots, N.
\end{aligned}
\end{equation}
Note that the EI solvers \eqref{eq:EISDE}--\eqref{eq:EIODE} for VP
correspond to DDPM and DDIM, respectively.
See also \cite{WCW24, ZSC23} for higher order EI solvers.

\quad Classical SDE theory \cite{KP92}
indicates that the discretization error is of 
order $C(T) \delta$, with $C(T)$ exponential in $T$.
A natural question is whether there are better bounds 
for the SDEs arising from the diffusion models in Section \ref{sc22}.
The general way to control the discretization error is to
prove a one-step estimate, and then unfold the recursion.
The analysis is mostly tedious,
and we do not plan to expand the details here.
The table below summarizes known results 
on the discretization error of various diffusion models.

\vskip 10 pt

\begin{center}
    \begin{tabular}{| c | c | c | c| c| c|}
    \hline
     Ref. & Metric & Model & Sampling & Step size & Discretization error  \\ \hline
       \cite{Chen23} & TV  &  OU & Euler & $\delta$ & $\sqrt{T  d} \sqrt{\delta}$ \\ \hline
       \cite{LLT22, LLT23} & TV & OU & PC & varying & $\mathcal{O}_T(1) \sqrt{d} \sqrt{\delta}$ \\  \hline
       \cite{WCW24}& $TV $ & OU & EI & $\delta$ & $\sqrt{Td^7} \sqrt{\delta^3}$ \\ \hline
       \cite{GNZ23}& $W_2$ & VE & Euler & $\delta$ & $e^{\mathcal{O}(T)} \sqrt{d} \delta $ \\ \hline
       & $W_2$ & VP& Euler & $\delta$  & $e^{\mathcal{O}(T^3)} \sqrt{d} \delta$ \\ \hline
       \cite{TZ24} & $W_2$ & CDPM & Euler & $\delta$ &  $\mathcal{O}_T(1) \sqrt{d} \sqrt{\delta} $ \\ \hline
    \end{tabular}\\
    \vskip 8 pt
    {TABLE $2$. Discretization errors.}
\end{center}

\quad There has been work 
on the convergence of diffusion models in a compact domain,
or satisfying the manifold hypothesis \cite{DeB22, pid22}.
In this case,
the Wasserstein distance is simply 
bounded by the total variation distance.
\cite{Chen23, LLT22, LLT23} also considered 
bounded target distributions,
which should not be confused with diffusion models in a compact domain
because the SDEs can live in an unbounded space.
An early stopping argument is used there to guarantee the convergence
of diffusion models in the $W_2$ metric.
See also \cite{Ben23a}
for finer results on the convergence of the OU process,
and \cite{LC24, LH24, LW23} that worked directly on the convergence of (discrete) DDPM.

\quad The theory of the ODE sampling of diffusion models is,
however,
still sparse.
\cite{Chen23b, HHL25} studied
the total variation convergence of the 
ODE sampler for the OU process,
and \cite{GZ24} 
established Wasserstein bounds 
for the ODE sampler of VE and VP.
See also \cite{Ben23b, Chen23c, LH24, LW23, LY24}
for closely related results 
on the deterministic sampler of diffusion models.

\quad Recently, there has been a surge of interest in understanding the convergence of diffusion models,
where the target distribution is concentrated on low-dimensional manifolds in a higher-dimensional space.
This phenomenon is called ``adapting to unknown low-dimensional structures'' \cite{LYl24}.
Typically, the dependence of the discretization error in $d$ 
is replaced with some intrinsic dimension $k \ll d$ (plus a polynomial $\log d$ correction).
See also \cite{HWC24, LHC25, PAD24, TY25} for further development in this direction.

\section{Further results on score matching}
\label{sc5}

\quad In the previous sections,
we have seen that 
the success of diffusion models 
relies largely on the quality of score matching.
This section reviews further results 
on  score matching
in the context of diffusion models.
Section \ref{sc51}
considers
statistical efficiency of score matching,
and 
Section \ref{sc52} 
presents approximations by neural nets.
In Section \ref{sc53},
we introduce 
the idea of diffusion model alignment 
by reinforcement learning (RL).

\subsection{Statistical efficiency of score matching}
\label{sc51}

In this subsection,
we focus on the statistical aspect 
of the score matching problem \eqref{eq:equivscoremat}.
Assume that
\begin{equation}
\label{eq:assumpsm}
s_\theta(t,x) = \nabla \log p_\theta(t,x), \quad \mbox{for all } t,x,
\end{equation}
where $p_\theta(t,\cdot)$ is a family of probability distributions parametrized by $\theta$.
In other words,
$s_\theta(\cdot, \cdot)$ is the score function of a parametric family of probability distributions 
instead of an arbitrary neural net.
This assumption is restrictive (and may seem to not be realistic),
but it serves as a first step to understand the statistical efficiency of score matching in the context of diffusion models.

\quad Under the assumption \eqref{eq:assumpsm}, the score matching problem \eqref{eq:equivscoremat} becomes
\begin{equation}
\label{eq:scoremat51}
\min_\theta \mathcal{J}_{\text{ESM}}(\theta):=\mathbb{E}_{p(t, \cdot)} |\nabla \log p_\theta(t,x) - \nabla \log p(t,X)|^2,
\end{equation}
and the corresponding implicit score matching problem is:
\begin{equation}
\label{eq:equivscoremat51}
\begin{aligned}
\min_\theta \mathcal{J}_{\text{ISM}}(\theta):&= \mathbb{E}_{p(t, \cdot)} \left[ |\nabla \log p_\theta(t,X)|^2  + 2 \, \nabla \cdot \nabla \log p_\theta (t,X)\right] \\
& = \mathbb{E}_{p(t, \cdot)} \left[ |\nabla \log p_\theta(t,X)|^2  + 2 \tr (\nabla^2 \log p_\theta (t,X))\right].
\end{aligned}
\end{equation}
Let $\widehat{\theta}_n$ be minimum point of the empirical version
of \eqref{eq:scoremat51} or \eqref{eq:equivscoremat51} from $n$ samples $X_1, \ldots, X_n \sim p(t, \cdot)$:
\begin{equation}
\widehat{\theta}_n:= \argmin_\theta \frac{1}{n} \sum_{i = 1}^n \left( |\nabla \log p_\theta(t,X_i)|^2  + 2 \tr (\nabla^2 \log p_\theta (t,X_i) )\right).
\end{equation}
The question of interest is whether
the estimated $p_{\widehat{\theta}_n}(t, \cdot)$ approximates well 
the probability distribution $p(t, \cdot)$. 
The answer is hinted in the following theorem.
\begin{theorem}[Consistency and asymptotic normality]
\label{thm:cansm}
\cite{Hyv07, Koe22}
Fix $t \in [0,T]$.
\begin{enumerate}[itemsep = 3 pt]
\item
Assume that $p(t, \cdot)$ has full support, and 
$p(t, \cdot) = p_{\theta_*}(t, \cdot)$ for some $\theta_*$
and 
$p (t, \cdot) \ne p_\theta (t, \cdot)$ for any $\theta \ne \theta_*$.
Then $\mathcal{J}_{\text{ESM}}(\theta) = 0$ if and only if $\theta = \theta_*$.
\item
Assume further that $p_\theta(t, \cdot)$ belongs to the exponential family,
i.e.
$p_\theta(t, \cdot) \propto \exp(\theta \cdot F(\cdot))$ 
for a suitably nice $F$.
Then 
\begin{equation}
\label{eq:CVSM}
\sqrt{n} (\widehat{\theta}_n - \theta_*) \to \mathcal{N}(0, \Gamma_{\tiny \mbox{SM}}),
\end{equation}
with
\begin{multline*}
\Gamma_{\tiny \mbox{SM}}:= \mathbb{E}_{p_{\theta_*}(t, \cdot)} \bigg[(\nabla F(X) \nabla F(X)^\top)^{-1} \Sigma_{\nabla F(X) \nabla F(X)^\top \theta_* + \Delta F(X)} \mathbb{E}(\nabla F(X) \nabla F(X)^\top)^{-1}\bigg],
\end{multline*}
where 
$\Sigma_{\bullet}$ is the Fisher information matrix,
and the Laplacian $\Delta F(X)$ applies coordinate-wise to $F(X)$.
\end{enumerate}
\end{theorem}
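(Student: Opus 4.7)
The plan is to treat the two parts separately, as they require different tools: part (1) is a pure identifiability argument relying on little more than the full-support hypothesis and the parametric identifiability assumption, while part (2) reduces to a classical $M$-estimator computation that becomes particularly transparent once one exploits the quadratic structure of the implicit score matching objective for the exponential family.

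For part (1), the forward direction is immediate: if $\theta = \theta_*$, then $p_\theta(t,\cdot) = p(t,\cdot)$ and the integrand of $\mathcal{J}_{\text{ESM}}$ vanishes pointwise. For the converse, $\mathcal{J}_{\text{ESM}}(\theta) = 0$ forces $\nabla \log p_\theta(t,x) = \nabla \log p(t,x)$ for $p(t,\cdot)$-a.e.\ $x$; by full support (which, under the standing smoothness assumptions, implies that a $p$-null set is Lebesgue-null) and continuity of the scores, the identity extends to all $x \in \mathbb{R}^d$. Hence $\nabla \log(p_\theta/p) \equiv 0$, so $p_\theta = c \, p$ for some constant $c > 0$, and normalization of both densities forces $c = 1$. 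Identifiability then yields $\theta = \theta_*$.

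For part (2), the key observation is that for the exponential family $p_\theta(t,x) \propto \exp(\theta \cdot F(x))$ one has $\nabla \log p_\theta(t,x) = \nabla F(x)^\top \theta$ and $\tr(\nabla^2 \log p_\theta(t,x)) = \theta \cdot \Delta F(x)$, so via Theorem~\ref{thm:scoremat} the ISM objective reduces to the quadratic
\begin{equation*}
\mathcal{J}_{\text{ISM}}(\theta) = \theta^\top A \, \theta + 2 \, \theta \cdot b, \qquad A := \mathbb{E}_{p}[\nabla F(X) \nabla F(X)^\top], \quad b := \mathbb{E}_{p}[\Delta F(X)],
\end{equation*}
whose unique minimizer (assuming $A \succ 0$) is $\theta_* = -A^{-1} b$. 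The empirical minimizer has the explicit form $\widehat{\theta}_n = -A_n^{-1} b_n$ with $A_n, b_n$ the sample analogues, and a straightforward manipulation gives
\begin{equation*}
\sqrt{n}\,(\widehat{\theta}_n - \theta_*) = -A_n^{-1} \sqrt{n}\,(b_n + A_n \theta_*) = -A_n^{-1} \cdot \frac{1}{\sqrt{n}} \sum_{i=1}^n Z_i,
\end{equation*}
where $Z_i := \Delta F(X_i) + \nabla F(X_i) \nabla F(X_i)^\top \theta_*$ satisfies $\mathbb{E}[Z_i] = b + A \theta_* = 0$ by the normal equation.

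The conclusion then follows by Slutsky: the law of large numbers gives $A_n \to A$ almost surely, while the central limit theorem applied to the i.i.d.\ mean-zero vectors $Z_i$ gives $n^{-1/2} \sum_i Z_i \to \mathcal{N}(0, \Sigma_Z)$ where $\Sigma_Z$ is the covariance of $\nabla F(X) \nabla F(X)^\top \theta_* + \Delta F(X)$ under $p_{\theta_*}(t,\cdot)$. Combining yields $\sqrt{n}(\widehat{\theta}_n - \theta_*) \to \mathcal{N}(0, A^{-1} \Sigma_Z A^{-1})$, which is the sandwich form advertised in \eqref{eq:CVSM}. The main obstacle is not conceptual but technical: one must verify the regularity conditions subsumed in ``suitably nice $F$,'' namely finiteness of $\mathbb{E}_{p_{\theta_*}}|\nabla F|^4$ and $\mathbb{E}_{p_{\theta_*}}|\Delta F|^2$ to justify CLT and LLN, together with strict positive definiteness of $A$ (an identifiability-type condition on the sufficient statistics ensuring $\widehat{\theta}_n$ is well-defined for $n$ large), plus the usual integrability needed to interchange $\nabla_\theta$ and $\mathbb{E}$ in Theorem~\ref{thm:scoremat}.
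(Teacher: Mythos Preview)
Your proposal is correct and follows essentially the same route as the paper: for part (1) you use the same identifiability-plus-normalization argument (indeed your multiplicative constant $p_\theta = c\,p$ is more accurate than the paper's additive ``$p_\theta = p + C$''), and for part (2) both you and the paper exploit the quadratic ISM objective to write $\theta_* = -A^{-1}b$ and $\widehat\theta_n = -A_n^{-1}b_n$ explicitly, then combine the CLT for the centered summands $Z_i = \nabla F(X_i)\nabla F(X_i)^\top\theta_* + \Delta F(X_i)$ with Slutsky. Your factorization $\sqrt{n}(\widehat\theta_n - \theta_*) = -A_n^{-1}\cdot n^{-1/2}\sum_i Z_i$ is in fact a tidier version of the paper's Neumann-series expansion of $(I + A^{-1}\delta_{1,n}/\sqrt{n})^{-1}$, but the underlying computation is the same.
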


\begin{proof}
(1) If $\mathcal{J}_{\text{ESM}}(\theta) = 0$,
 then $\nabla \log p_\theta (t, \cdot) = \nabla \log p (t, \cdot)$.
So $p_\theta (t, \cdot) = p(t, \cdot) + C$, 
and $C = 0$ because $p(t, \cdot)$ and $p_\theta(t, \dot)$ are probability distributions.
By the hypothesis, we have $\theta = \theta_*$.

(2) Recall that $p(t, \cdot) = p_{\theta_*}(t, \cdot)$.
The key to the analysis relies on the fact that
\begin{equation}
\theta_* = - \mathbb{E}(\nabla F(X) \nabla F(X)^\top)^{-1} \mathbb{E} \Delta F(X).
\end{equation}
Denote by $\widehat{\mathbb{E}}_n$ the empirical expectation of $n$ i.i.d. random variables distributed by $p(t, \cdot)$.
Write
$\widehat{\mathbb{E}}_n(\nabla F(X) \nabla F(X)^\top) = \mathbb{E}(\nabla F(X) \nabla F(X)^\top) + \delta_{1,n}/\sqrt{n}$
and 
$\widehat{\mathbb{E}}_n \Delta F(X) = \mathbb{E} F(X) + \delta_2/\sqrt{n}$.
Note that
\begin{equation*}
\begin{aligned}
\widehat{\theta}
& = - \widehat{\mathbb{E}}_n(\nabla F(X) \nabla F(X)^\top)^{-1} \widehat{\mathbb{E}}_n\Delta F(X) \\
& = - \left( \underbrace{\mathbb{E}(\nabla F(X) \nabla F(X)^\top)^{-1} \widehat{\mathbb{E}}_n(\nabla F(X) \nabla F(X)^\top)}_{I + \mathbb{E}(\nabla F(X) \nabla F(X)^\top)^{-1} \delta_{1,n}/\sqrt{n}}  \right)^{-1} \\
& \qquad \qquad \qquad \qquad \qquad \qquad \qquad \qquad \underbrace{\mathbb{E}(\nabla F(X) \nabla F(X)^\top)^{-1} \widehat{\mathbb{E}}_n\Delta F(X)}_{-\theta_* + \mathbb{E}(\nabla F(X) \nabla F(X)^\top)^{-1} \delta_{2,n}/\sqrt{n}} \\
& = \left( -I + \mathbb{E}(\nabla F(X) \nabla F(X)^\top)^{-1} \frac{\delta_{1,n}}{\sqrt{n}} + \mathcal{O}\bigg(\frac{1}{n}\bigg)\right) \\
& \qquad \qquad \qquad \qquad \qquad \qquad \qquad \qquad \left(-\theta_* + \mathbb{E}(\nabla F(X) \nabla F(X)^\top)^{-1} \frac{\delta_{2,n}}{\sqrt{n}} \right) \\
&= \theta_* - \mathbb{E}(\nabla F(X) \nabla F(X)^\top)^{-1} \frac{\theta_* \delta_{1,n} + \delta_{2,n}}{\sqrt{n}} + \mathcal{O}(1/\sqrt{n}),
\end{aligned}
\end{equation*}
where we use Slutsky's theorem in the third equation.
Moreover, 
\begin{multline*}
\frac{\theta_* \delta_{1,n} + \delta_{2,n}}{\sqrt{n}} \\ = \mathbb{E}(\nabla F(X) \nabla F(X)^\top \theta_* + \Delta F(X)) - \widehat{\mathbb{E}}_n(\nabla F(X) \nabla F(X)^\top \theta_* + \Delta F(X)),
\end{multline*}
which converges in distribution to $\mathcal{N}(0, \Sigma_{\nabla F(X) \nabla F(X)^\top \theta_* + \Delta F(X)})$.
Combining the above estimates yields \eqref{eq:CVSM}.
\end{proof}

\quad Under the assumptions in Theorem \ref{thm:cansm},
it is clear that 
Assumption \ref{assump:blackbox} ($L^2$-bound on score matching) holds
if 
\begin{equation*}
\mathbb{E}_{p(t, \cdot)}\left(\sup_\theta |\partial_\theta p_\theta(t, X)|^{2+\epsilon}\right) < \infty,
\end{equation*}
for any $\epsilon > 0$.
Another interesting result from \cite{Koe22} shows that
under an isoperimetric assumption on $p(t, \cdot)$,
the statistical efficiency of score matching is close to the maximum likelihood estimate
by comparing $\Gamma_{\tiny \mbox{SM}}$ with $\Gamma_{\tiny \mbox{MLE}} = \Sigma_{F(X)}^{-1}$.
Similar results were obtained by \cite{PR23} for a more general class of distributions.

\subsection{Score matching by neural nets}
\label{sc52}

Here we consider score matching for VP.
Recall $\gamma(t):= \exp\left(-2 \int_0^t \beta(s) ds \right)$.
By Tweedie's formula \cite{Efron11, Robbins56},
the score function can be written as:
\begin{equation}
\label{eq:GcscoreVP}
\nabla \log p(t, x)= \frac{\gamma(t)^{\frac{1}{4}}}{1 - \sqrt{\gamma(t)}} \mathbb{E}(X_0 \,|\, X_t = x) - \frac{1}{1 - \sqrt{\gamma(t)}} x,
\end{equation}
Inspired by the special structure \eqref{eq:GcscoreVP},
it is natural to take the function approximation:
\begin{equation}
\label{eq:Unet}
s_\theta(t,x) =\frac{\gamma(t)^{\frac{1}{4}}}{1 - \sqrt{\gamma(t)}} f_\theta(t,x) - \frac{1}{1 - \sqrt{\gamma(t)}} x,
\end{equation}
with $f_\theta(t,x)$ a suitable neural net (e.g. ReLU net).
As pointed out by \cite{CH23},
the specific form \eqref{eq:Unet} has close affinity to the U-net,
where the term $-\frac{1}{1 - \sqrt{\gamma(t)}} x$ corresponds to a shortcut connection.

\quad The following theorem provides a template for a class of results
concerning the approximation efficiency of score matching.
To simplify the presentation,
we take the weight $\lambda(t) = 1$.
\begin{theorem}
\cite{CH23, HR24}
Let $\widehat{s}_{\theta}(\cdot, \cdot)$ solve
one of the empirical score matching problems (in Section \ref{sc3}),
where $\mathbb{E}$ is replaced with $\widehat{\mathbb{E}}$ under finite samples.
Let $\widehat{s}_{\theta, k}$ be the output at iteration $k$ of some algorithm (e.g. SGD).
For $\widehat{s} = \widehat{s}_{\theta}$ or $\widehat{s}_{\theta, k}$,
we have: 
\begin{equation}
\label{eq:decompCH}
\begin{aligned}
& \int_{0}^T \mathbb{E}_{p(t, \cdot)}|\widehat{s}(t, \cdot) - \nabla \log p (t, \cdot)|^2 dt \\
& \qquad \le \int_{0}^T  \mathbb{E}_{p(t, \cdot)}|\widehat{s}(t, \cdot) - \nabla \log p (t, \cdot)|^2 
- \widehat{\mathbb{E}}_{p(t, \cdot)}|\widehat{s}(t, \cdot) - \nabla \log p (t, \cdot)|^2  dt \\
& \qquad \qquad +\inf_{\theta} \int_{0}^T \widehat{\mathbb{E}}_{p(t, \cdot)}| s_\theta(t, \cdot) - \nabla \log p (t, \cdot)|^2 dt,
\end{aligned}
\end{equation}
where the first term on the right side of \eqref{eq:decompCH} is the sample error of
score matching,
and the second term is the approximation error.
\end{theorem}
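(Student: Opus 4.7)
The plan is to prove the bound by the classical add-and-subtract decomposition of the population risk into a generalization gap and an empirical risk, and then to bound the empirical risk using the optimality property that defines $\widehat{s}$. Concretely, define the population risk
\begin{equation*}
R(s) := \int_0^T \mathbb{E}_{p(t,\cdot)}|s(t,\cdot) - \nabla \log p(t,\cdot)|^2\, dt
\end{equation*}
and its empirical counterpart
\begin{equation*}
\widehat{R}(s) := \int_0^T \widehat{\mathbb{E}}_{p(t,\cdot)}|s(t,\cdot) - \nabla \log p(t,\cdot)|^2\, dt.
\end{equation*}
The identity $R(\widehat{s}) = \bigl(R(\widehat{s}) - \widehat{R}(\widehat{s})\bigr) + \widehat{R}(\widehat{s})$ is trivial; the content is the inequality $\widehat{R}(\widehat{s}) \le \inf_\theta \widehat{R}(s_\theta)$ and the interpretation of the first bracket as a uniform sample error.

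First, for $\widehat{s} = \widehat{s}_\theta$, the (empirical) score matching optimality gives $\widehat{R}(\widehat{s}_\theta) \le \widehat{R}(s_\theta)$ for every $\theta$, hence $\widehat{R}(\widehat{s}_\theta) \le \inf_\theta \widehat{R}(s_\theta)$. Plugging this into the decomposition above yields the stated inequality with the first term being exactly the sample (generalization) error $R(\widehat{s}) - \widehat{R}(\widehat{s})$ and the second being the empirical approximation error $\inf_\theta \widehat{R}(s_\theta)$. One technical point worth flagging is that the equivalence of the explicit objective $R(s)$ and the feasible implicit/denoising objective holds only up to a constant independent of $\theta$ (Theorems~\ref{thm:scoremat} and \ref{thm:DSM}), so the optimality step should be applied to whichever objective $\widehat{s}_\theta$ is actually trained against, and the additive constants cancel across the minimum.

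Second, for $\widehat{s} = \widehat{s}_{\theta,k}$, an iterate of an optimization algorithm, the bound $\widehat{R}(\widehat{s}_{\theta,k}) \le \inf_\theta \widehat{R}(s_\theta)$ is not automatic, and strictly speaking an optimization error $\mathcal{E}_{\text{opt}}(k) := \widehat{R}(\widehat{s}_{\theta,k}) - \inf_\theta \widehat{R}(s_\theta)$ must be absorbed into the right-hand side. For the presentation to go through as stated, one invokes a convergence guarantee for the chosen algorithm (e.g., SGD under convexity or smoothness assumptions as in \cite{CH23,HR24}) that drives $\mathcal{E}_{\text{opt}}(k) \to 0$, so that the decomposition holds either exactly in the limit or up to a vanishing optimization residual that is not tracked in the statement.

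The main obstacle, if one wants a genuinely useful bound rather than just the trivial algebraic identity, is not this decomposition itself but the control of the generalization gap $R(\widehat{s}) - \widehat{R}(\widehat{s})$ uniformly over the function class containing $\widehat{s}$; that would require a capacity argument (Rademacher complexity, covering numbers for the chosen neural network class, or a stability argument for the iterates), and it is there that the assumptions in \cite{CH23,HR24} do the real work. At the level of the stated theorem, however, the proof reduces to the two-line accounting above, and no further effort is needed.
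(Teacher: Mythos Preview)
Your argument is correct and is exactly the standard add--and--subtract decomposition one would expect here. Note, however, that the paper does \emph{not} supply its own proof of this theorem: it is stated as a template result attributed to \cite{CH23, HR24}, and the surrounding text only comments on how those references instantiate the function class and the algorithm. So there is nothing in the paper to compare your argument against beyond the bare statement.

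Your discussion is in fact more careful than the paper's statement. You correctly flag that for $\widehat{s}=\widehat{s}_{\theta,k}$ the step $\widehat{R}(\widehat{s}_{\theta,k})\le\inf_\theta\widehat{R}(s_\theta)$ is not automatic and requires either an optimization-error residual $\mathcal{E}_{\mathrm{opt}}(k)$ on the right-hand side or an appeal to a convergence guarantee for the training algorithm; the theorem as written suppresses this term. You also rightly point out that the equivalence between the explicit, implicit, and denoising objectives (Theorems~\ref{thm:scoremat} and~\ref{thm:DSM}) is only up to an additive constant independent of $\theta$, so the optimality comparison must be made in whichever surrogate objective is actually minimized, with the constants cancelling across the $\inf_\theta$. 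Both observations are apt and would be needed to make the inequality fully rigorous in the iterate case.
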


\quad In \cite{CH23}, $f_\theta(t,x)$ is taken to be a multilayer ReLU with linear structure.
But no specific algorithm of stochastic optimization is used 
(corresponding to $\widehat{s} = \widehat{s}_\theta$).
\cite{HR24} considered a two-layer ReLU net, 
and gradient descent is used to update $\widehat{s} = \widehat{s}_{\theta,k}$.
In particular, they used a neural tangent kernel as a universal proxy to the score,
and 
showed that score matching (using a two-layer ReLU net) is as easy as kernel regression.
\cite{WHT24} considered a feedforward net, also with gradient descent to update $\widehat{s}_{\theta,k}$.
In general, the bound \eqref{eq:decompCH} depends,
in a complicated manner,
on the data distribution, 
the net structure,
the sample size,
and the number of iterations in the optimization algorithm.

\subsection{Score matching by reinforcement learning}
\label{sc53}

In this last subsection, 
we consider score matching by RL,
which was initiated by \cite{Black23, fan2023optimizing, Lee23}.
The idea is to treat the score as the action in RL,
where the reward measures how well the sample $Y_T$ 
aligns with human preferences (diffusion model alignment).
Since the diffusion model is specified by the SDE,
the recently developed continuous RL \cite{Jia22, Jia222, WZZ20, ZTY23} provides 
a convenient framework to formulate score matching by RL.

\quad Let's recall the setup of continuous RL.
The state dynamics $(X^a_t, \, t \ge 0)$ is governed by the SDE:
\begin{equation}
\label{SDE_dynamic}
d X_t^a=b\left(t, X_t^a, a_t\right)dt+\sigma\left(t\right) d B_t,\quad X^a_0\sim p_{\tiny \mbox{init}}(\cdot),
\end{equation}
where $b:[0,T]\times\mathbb{R}^d \times \mathcal{A} \to \mathbb{R}^d$,
$\sigma:[0,T]\to \mathbb{R}_+$,
and the action $a_s \in \mathcal{A}$ is generated from a distribution $\pi\left(\cdot \,|\, t, X^a_t\right)$ by external randomization.
Define the (state) value function given the feedback policy $\{\pi(\cdot \,|\, t, x): x \in \mathbb{R}^d\}$ by
\begin{equation*}
\label{Value function Definition}
\begin{aligned}
V( t, x ; \pi) : = \mathbb{E} \left[\int_t^{T} r\left(s, X_s^\pi, a_s^\pi\right) \mathrm{d} s +h(X_T^{\pi}) \, \bigg| \, X_0^\pi = x \right],
\end{aligned}
\end{equation*}
where $r:\mathbb{R}_+ \times \mathbb{R}^d \times \mathcal{A} \to \mathbb{R}$ and $h: \mathbb{R}^d \to \mathbb{R}$ are the running and terminal rewards, respectively.
The performance metric is 
$\eta(\pi):= \int_{\mathbb{R}^d} V(0, x; \pi) p_{\tiny \mbox{init}}(dx)$.
The main task of continuous RL is to approximate
$\max_\pi \eta(\pi)$ by constructing a sequence of policies $\pi_k$, $k = 1,2,\ldots$ recursively such that $\eta(\pi_k)$ is non-decreasing.

\quad Now we introduce the main idea from \cite{ZZT24, ZCZ25} (see also \cite{GZZ24}).
Comparing the backward process \eqref{eq:SDErevexp}
with the continuous RL \eqref{SDE_dynamic},
it is natural to set $p_{\tiny \mbox{init}}(\cdot) = p_{\tiny \mbox{noise}}(\cdot)$,
\begin{equation}
b(t,x,a)= -f(T-t, x) + g^2(T-t) a \quad \mbox{and} \quad \sigma(t) = g(T-t).
\end{equation}
Here the action $a$ has a clear meaning -- 
it is at the place of the score function. 
So we can deploy the continuous RL machinery to ``fine-tune'' the score.
At time $t$, we use a Gaussian exploration 
$a^\theta_t \sim \pi_{\theta}(\cdot\mid t,Y^\theta_t) = N(\mu_{\theta}(t,Y^\theta_t),\sigma_t^2I)$, 
where $\sigma_t$ is a predetermined exploration level for each $t$.
Under the Gaussian exploration, the RL dynamics is:
\begin{equation}
d Y_t^{\theta}=\left[-f(T-t,Y_t^{\theta}) + g^2(T-t) \mu_{\theta}(t,Y_t^{\theta})\right] dt+g(T-t) dB_t,\,\, Y^{\theta}_0\sim p_{\tiny \mbox{noise}}(\cdot).
\end{equation}
Denote by $p_\theta(t, \cdot)$ the probability distribution of $Y^\theta_t$.
In particular, the score matching parametrization $\mu_{\theta}(t, Y^\theta_t) = s_{\theta}(T-t,Y_t^{\theta})$
recovers the backward process \eqref{eq:SDErevtheta2}.

\quad In order to prevent the model from overfitting to the reward or catastrophic forgetting, 
we add a penalty term to ensure that
the pretrained model $s_{\tiny \mbox{pre}}(\cdot, \cdot)$ is fully exploited.
The problem that we want to solve is:
\begin{equation}
\label{objective with regularization}
\max_\theta \mathbb{E}\left[R(Y^{\theta}_T)-\beta \operatorname{KL}\left(p_{\theta}(T,\cdot), p_{\theta_{pre}}(T,\cdot)\right)\right],
\end{equation}
where $\beta>0$ is a (given) penalty cost,
$R: \mathbb{R}^d \to \mathbb{R}$ is a reward to evaluate the sample $Y^\theta_T$.
It is easy to see that the objective function \eqref{objective with regularization} is equivalent to:
\begin{align*}
\mathbb{E} \int_0^{T}\underbrace{-\frac{\beta}{2} g^2(T-t)|a_t^{\theta}-\mu_{\theta_{pre}}(t, Y^\theta_t)|^2}_{r (t,Y^\theta_t,a_t^{\theta})} \mathrm{d} t +\mathbb{E}\underbrace{R(Y^{\theta}_T)}_{h(Y^{\theta}_T)},
\end{align*}
We also define the corresponding value function:
\begin{align}
\label{continuous-time value function with regularization}
V(t,x; \pi_\theta)= 
\mathbb{E} \bigg[\int_t^{T}-\frac{\beta}{2} g^2(T-t)|a_t^{\theta}-\mu_{\theta_{pre}}(t, Y^\theta_t)|^2\mathrm{d} t +R(Y^{\theta}_T) \,\bigg|\, Y^{\theta}_t=x\bigg].
\end{align}


\quad The following theorem gives the policy gradient with respect to the performance metric $\eta$. 
\begin{theorem}[Policy gradient for score matching]
\cite{GZZ24, ZCZ25}
\label{thm:PG formula simplified}
The policy gradient of a policy $\pi^{\theta}$ parameterized by $\theta$ is:
\begin{equation}
\nabla_{\theta} V(t,x; \pi_\theta)= \mathbb{E}\left[\int _ {t} ^ { T }\nabla_{\theta} \log \pi_{\theta} ( a _ {s}^{\theta} | s , Y _ {s}^{\theta} ) q(t, Y_s^{\theta}, a_s^{\theta} ; \pi_\theta)ds \, \bigg| \, Y^\theta_t = x\right],
\end{equation}
where 
$q(t, x, a ; \pi_\theta)=\frac{\partial V}{\partial t}\left(t, x ; \pi_\theta\right)+ g^2(T-t) \frac{\partial V}{\partial x}\left(t,x ; \pi_\theta \right) a + r(t,x,a)$ is the advantage function in continuous RL.
\end{theorem}

\begin{proof}
It follows from \cite[Theorem 5]{Jia22} that for all $t,x$,
\begin{equation}
\label{eqn:Jia&Zhou PG}
\begin{aligned}
& \nabla_\theta V(t, x ; \pi_\theta) \\
& \qquad =  \mathbb{E}\bigg[\int_t ^ { T }\nabla_\theta \log \pi_{ \theta} ( a _ {s} ^ {\theta} | s , Y _ { s } ^ { \theta } ) \bigg(d V(s, Y_s^{\theta} ; \pi_\theta) +R(Y_s^{\theta}) d s\bigg)  \, \bigg| \, Y_t^{\theta}=x\bigg].
\end{aligned}
\end{equation}
By applying It\^o's formula to $V(t, Y^\theta_t; \pi_\theta)$, we have:
\begin{equation}
\label{eq:ItoV}
\begin{aligned}
& d V(t, Y^\theta_t; \pi_{\theta}) \\
& = 
\bigg[ \partial_t V(t, Y^\theta_t; \pi_{\theta}) + \frac{1}{2} g^2(T-t) \Delta V(t, Y^\theta_t; \pi_{\theta}) - f(T-t, Y^\theta_t) \frac{\partial V}{\partial x}(t,Y^\theta_t, \pi_\theta) \\
& \qquad  + g^2(T-t)  \frac{\partial V}{\partial x}(t,Y^\theta_t, \pi_\theta)a_t^\theta  + r(t, Y^\theta_t, a^\theta_t) \bigg] dt 
+  \frac{\partial V}{\partial x}(t,Y^\theta_t; \pi_\theta)g(T-t) dB_t.
\end{aligned}
\end{equation}
Injecting \eqref{eq:ItoV} into \eqref{eqn:Jia&Zhou PG},
and noting 
$g^2(T-t)  \frac{\partial V}{\partial x}(t,Y^\theta_t, \pi_\theta)a_t^\theta  + r(t, Y^\theta_t, a^\theta_t)$
are the only terms involving $a_t^{\theta}$ yields the desired result.
\end{proof}

\quad Relying on Theorem \ref{thm:PG formula simplified}, the machinery in \cite{ZTY23}
provides various algorithms (e.g., proximal policy optimization (PPO) \cite{schulman2017}) to implement score fine-tuning.

\section{Conclusion and future directions}
\label{sc8}

\quad In this section, we collect a few open problems
which we hope to trigger future research.

\begin{enumerate}[itemsep = 3 pt]
\item
The theory for 
the ODE samplers, and especially the consistency model is sparse.
\cite{LHW24} studied the convergence rate of a modified CD.
It would be useful to establish provable/sharp guarantees for
the original CD. 
\item
There has also been work on diffusion models in constraint domains
\cite{DS22, FK23, LE23, RDD22}.
Establishing the convergence result in each particular case is of independent interest.
\item
Approximation efficiency for score matching were studied
in simple scenarios,
e.g., exponential families, two-layer neural nets. 
It would be interesting to see whether 
one can go beyond these cases 
to match the practice.
\item 
Diffusion guidance \cite{Dh21, HS21} and fine-tuning \cite{UZ24, UZ242}
are used to generate data that is customized for specific downstream tasks.
Some recent work \cite{Guo24, Tang24, TX25, WC24}
explored the theoretical aspects of these approaches,
which calls for further development.
\item
Diffusion model alignment is a promising subject
in the context of RL from human feedback (RLHF).
Rudiments are provided in Section \ref{sc53},
and it remains to build an entire framework, 
with provable guarantees.
\end{enumerate}

\bigskip
{\bf Acknowledgement}: 
We thank Yuxin Chen, Sinho Chewi, Xuefeng Gao and Yuting Wei for helpful discussions.
Hanyang Zhao is supported by NSF grant DMS-2206038,
and InnoHK Initiative, The
Government of the HKSAR and the AIFT Lab.
Wenpin Tang is supported by NSF grants DMS-2113779 and
DMS-2206038, 
the Columbia Innovation Hub grant, 
and the Tang Family Assistant Professorship.

\bibliographystyle{abbrv}
\bibliography{unique}
\end{document}